\documentclass[table,dvipsnames,svgnames]{fairmeta}

\usepackage[round]{natbib}
\usepackage[utf8]{inputenc} 
\usepackage[T1]{fontenc}    
\usepackage{url}            
\usepackage{booktabs}       
\usepackage{amsfonts}       
\usepackage{nicefrac}       
\usepackage{microtype}      
\usepackage{enumitem}
\usepackage{makecell}

\usepackage{color,soul}

\colorlet{mygreen}{green!55!black}
\definecolor{myyellow1}{HTML}{D89A3C}
\colorlet{myyellow}{myyellow1!90!black}
\definecolor{metablue}{HTML}{0064E0}

\definecolor{cadetblue}{rgb}{0.37, 0.62, 0.63}

\newcommand{\cellhi}{\cellcolor{metablue!10}}
\newcommand{\cellhii}{\cellcolor{metablue!25}}
\newcommand{\cellhj}{\cellcolor{myyellow!10}}
\newcommand{\cellhjj}{\cellcolor{myyellow!30}}

\usepackage{hyperref}       
\hypersetup{
    colorlinks,
    linkcolor={metablue},
    citecolor={metablue},
    urlcolor={metablue}
}


\usepackage{amsmath,amsthm,amssymb,bm}
\usepackage{amsfonts}
\usepackage{thmtools} 
\usepackage{bbm}
\usepackage{algorithm}
\usepackage{algorithmic}
\usepackage{cancel}
\usepackage{graphicx}
\usepackage{inconsolata}
\usepackage{booktabs}
\usepackage{thmtools}
\usepackage{cleveref}
\usepackage{thm-restate}
\usepackage{pifont}
\usepackage{arydshln}
\usepackage{multirow}
\usepackage{svg}
\usepackage{subcaption}

\newcommand{\ra}[1]{\renewcommand{\arraystretch}{#1}}


\makeatletter
\DeclareRobustCommand{\cev}[1]{%
  {\mathpalette\do@cev{#1}}%
}
\newcommand{\do@cev}[2]{%
  \vbox{\offinterlineskip
    \sbox\z@{$\m@th#1 x$}%
    \ialign{##\cr
      \hidewidth\reflectbox{$\m@th#1\vec{}\mkern4mu$}\hidewidth\cr
      \noalign{\kern-\ht\z@}
      $\m@th#1#2$\cr
    }%
  }%
}
\makeatother

\newenvironment{talign*}
 {\csname align*\endcsname}
 {\endalign}

\definecolor{EditTeal}{RGB}{5, 146, 148}
\definecolor{EditDarkRed}{RGB}{137, 30, 83}
\definecolor{EditOrange}{RGB}{204, 138, 47}

\newcommand{\inscolor}{\color{EditTeal}}
\newcommand{\subcolor}{\color{EditOrange}}
\newcommand{\delcolor}{\color{EditDarkRed}}
\newcommand{\del}{{{\delcolor\text{del}}}}
\newcommand{\ins}{{{\inscolor\text{ins}}}}
\newcommand{\sub}{{{\subcolor\text{sub}}}}

\newcommand{\insarrow}{{\inscolor$\downarrow$}}
\newcommand{\delarrow}{{\delcolor$\downarrow$}}
\newcommand{\subarrow}{{\subcolor$\downarrow$}}

\newcommand{\tsum}{\textstyle \sum}
\newcommand{\tprod}{\textstyle \prod}

\newcommand{\strip}{f_{\text{rm-blanks}}}
\newcommand{\blank}{{\color{gray}\varepsilon}}
\renewcommand{\P}{\mathbb{P}}

\newcommand{\gT}{\mathcal{T}}
\newcommand{\gX}{\mathcal{X}}
\newcommand{\gZ}{\mathcal{Z}}

\newcommand{\1}{\mathbbm{1}}

\newcommand{\E}{\mathbb{E}}

\usepackage{xspace}
\newcommand*{\eg}{{\it e.g.}\@\xspace}
\newcommand*{\ie}{{\it i.e.}\@\xspace}

\newcommand{\m}{\mathsf{m}}
\newcommand{\M}{\mathsf{M}}

\definecolor{mygray}{gray}{0.95}


\newcommand{\graybox}[1]{%
\begingroup
\setlength{\fboxsep}{0pt}%
\begin{tcolorbox}[
boxsep=0pt, left=0pt, right=0pt, top=0pt, bottom=0pt,
colframe=mygray,colback=mygray,  
highlight math style={enhanced},
boxrule=0pt,
]%
\begin{minipage}{\linewidth}
\vspace{-0.5em}%
{#1}%
\end{minipage}%
\end{tcolorbox}
\endgroup
}

\theoremstyle{plain}  

\newcommand{\dummy}{\mathbbm{m}} 

\title{Edit Flows: Flow Matching with Edit Operations}

\author[1]{Marton Havasi}
\author[1]{Brian Karrer}
\author[1]{Itai Gat}
\author[1]{Ricky T. Q. Chen}

\affiliation[1]{FAIR at Meta}

\abstract{


Autoregressive generative models naturally generate variable-length sequences, while non-autoregressive models struggle, often imposing rigid, token-wise structures.
We propose Edit Flows, a non-autoregressive model that overcomes these limitations by defining a discrete flow over sequences through edit operations---insertions, deletions, and substitutions.
By modeling these operations within a Continuous-time Markov Chain over the sequence space, Edit Flows enable flexible, position-relative generation that aligns more closely with the structure of sequence data. 
Our training method leverages an expanded state space with auxiliary variables, making the learning process efficient and tractable. 
Empirical results show that Edit Flows outperforms both autoregressive and mask models on image captioning and significantly outperforms the mask construction in text and code generation.}

\begin{document}
\maketitle

\newcommand{\neuripsvspace}[1]{}
\newcommand{\arxivvspace}[1]{\vspace{#1}}
\newcommand{\neuripsloose}{}

\section{Introduction}

Non-autoregressive models have become the standard across high-dimensional modalities, thanks to their ability to produce coherent and globally consistent outputs. Recent advances include MovieGen \citep{polyak2025moviegencastmedia} for video, Audiobox \citep{vyas2023audioboxunifiedaudiogeneration} for audio, and Stable Diffusion 3 \citep{esser2024scaling} for images. This trend extends to discrete code and text generation as well: recent diffusion-based models such as LLaDa \citep{nie2025large}, DREAM \citep{dream2025}, and Mercury \citep{inception2025} show that fully parallel generation can match or even surpass strong autoregressive baselines on certain open-ended language tasks. Despite these advances, current non-autoregressive models rely on rigid, factorized representations with fixed token positions. They work by iteratively unmasking or replacing tokens in the target sequence. Critically, they cannot add or remove tokens: two fundamental operations for modeling sequential data.

In this paper, we propose \emph{Edit Flows}, a novel non-autoregressive framework that models generation as a discrete flow over the space of sequences via \emph{edit operations}---insertions, deletions, and substitutions. We frame sequence generation as a stochastic process governed by a Continuous-time Markov Chain (CTMC) over full sequences, in contrast to the usual factorized representation with absolute token positions (Figure \ref{fig:generation}). The model learns to estimate the rate of each possible edit operation conditioned on the current sequence (Figure \ref{fig:output}). This enables modeling based on relative token positions and eliminates the need for masking or padding tokens during training or inference. 
Moreover, Edit Flows naturally accommodate variable-length sequences. In contrast to existing non-autoregressive models that generate tokens in fixed lengths or rely on heuristic semi-autoregressive sampling \citep{nie2025large}, Edit Flows can produce longer or shorter outputs adaptively, depending on the context.

Despite the conceptual simplicity of modeling sequence transitions through edits, training such models is non-trivial. A direct optimization of full sequence-level stochastic processes typically demands costly computations. To address this, we introduce a Flow Matching-based~\citep{lipman2024flow} training procedure that augments the state space with auxiliary variables that determine one possible chain of edits that leads to the target sequence. By sampling these auxiliary variables in each training iteration (without exposing them to the model), we obtain a tractable training objective and the model automatically learns to infer these auxiliary variables.\looseness=-1

Empirically, Edit Flows show a strong and consistent improvement over fixed-length discrete flow and diffusion models \citep{campbell2024generative,gat2024discrete,shi2024simplified} across several benchmarks, including image-to-text generation at 280M parameter scale (MS-COCO, Image Captioning 3M), code generation at 1.3B parameter scale (HumanEval, MBPP), and open-ended text benchmarks at 1.3B parameter scale (HellaSwag, ARC, PIQA, OBQA, WinoGrande). On image-to-text generation, Edit Flows outperformed all baselines, including the autoregressive model, and on code generation, it has a relative improvement of 138\% over the mask  model. We summarize our contributions:
\begin{enumerate}
    \item[$\triangleright$] We introduce Edit Flows, a non-autoregressive generation framework expanding upon the Discrete Flow Matching recipe, with native support for variable-length sequences via edit operations---insertions, substitutions, and deletions.
    \item[$\triangleright$] We construct a sequence-level probability path, enabling CTMC-based modeling directly over sequences of varying lengths, unlike prior work focused on token-level transitions.
    \item[$\triangleright$] We demonstrate the effectiveness of Edit Flows on large-scale benchmarks in image captioning, open-ended text benchmarks, and code generation.
\end{enumerate}

\begin{figure}[t]
    \begin{minipage}[t]{0.54\textwidth}
        \includegraphics[width=\textwidth]{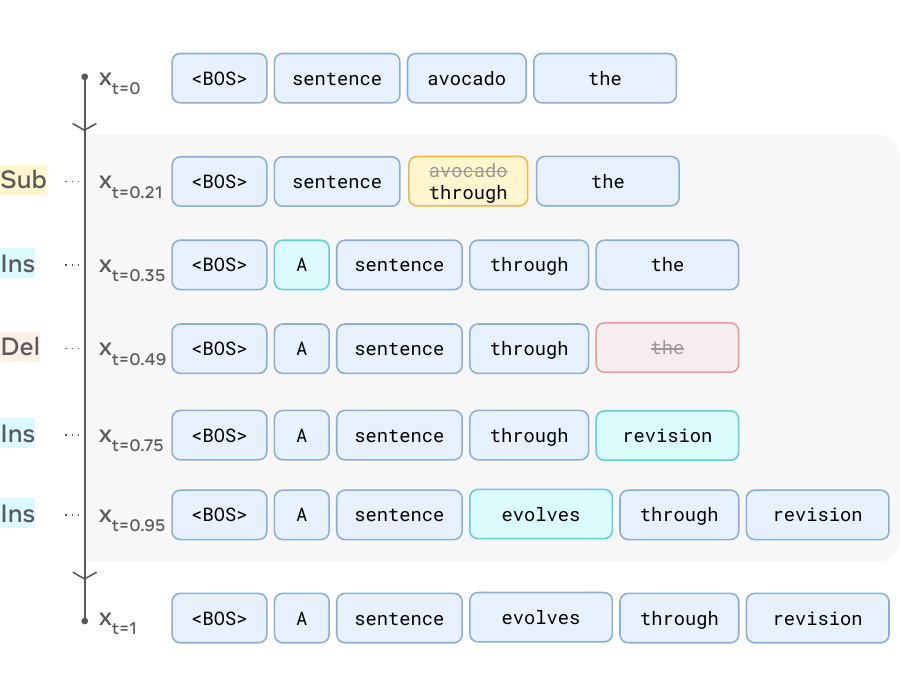}
        \captionof{figure}{\textbf{Edit Flow sampling process.} Starting with $x_0$ containing random tokens or an empty sequence, the model applies edits to $x_t$ and reaches a cohesive sentence at time $t=1$.}\label{fig:generation}
    \end{minipage} \hfill
    \begin{minipage}[t]{0.44\textwidth}
        \includegraphics[width=\textwidth]{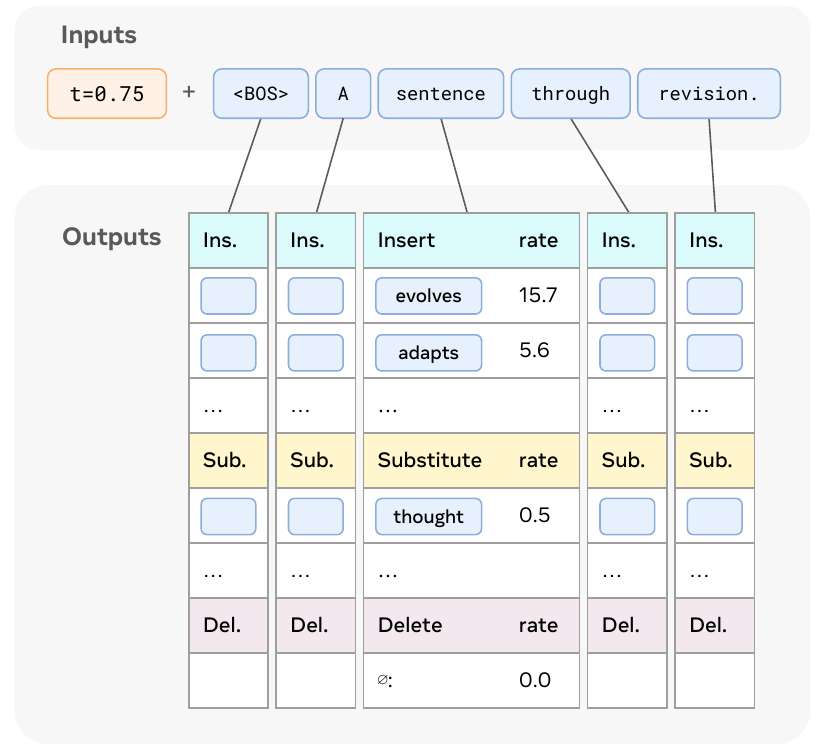}
        \captionof{figure}{\textbf{Edit Flow model inputs and outputs.} Given $x_t$, the model predicts the rate of each possible edit.}\label{fig:output}
    \end{minipage}
\end{figure}

\section{Preliminaries}
\neuripsvspace{-0.5em}
\subsection{Continuous-time Markov Chains}
\label{sec:ctmc}
\neuripsvspace{-0.5em}
To form the basis of our discrete generative model \citep{campbell2024generative, gat2024discrete, holderrieth2024generator, shaul2024flow}, we make use of Continuous-time Markov Chains (CTMC) over a discrete space $\gX$. These are Markov processes that generate trajectories $(X_t)_{t\in [0,1]}$ and is characterized by a \emph{rate} $u_t$ denoting the infinitesimal transition probabilities between states
\begin{align}
\label{eq:forward_ctmc_simulation}
\P(X_{t+h} = x | X_t = x_t) = \delta_{x_t}(x) + hu_t(x | x_t) + o(h)
\end{align}
where $o(h)$ satisfies $\lim_{h\rightarrow 0}\tfrac{o(h)}{h} = 0$n. Sampling from a CTMC can be done by iteratively applying the update formula \eqref{eq:forward_ctmc_simulation}. The rate $u_t(x | x_t)$ denotes the infinitesimal probabilities of transitioning from a state $x_t$ to any other state $x$ at time $t$, and for \eqref{eq:forward_ctmc_simulation} to be a proper probability mass function, we need both sides to sum to one. Hence, $u_t$ needs to satisfy
\begin{equation}\label{eq:rate_conditions_general}
        u_t(x | x_t)\geq 0 \text{ for all } x\ne x_t, \quad \tsum_{x} u_t(x | x_t) = 0,
\end{equation}
typically referred to as the \emph{rate conditions}. Note this enforces $u_t(x_t | x_t) = - \tsum_{x \neq x_t} u_t(x | x_t)$.

We say a rate $u_t$ ``generates'' a probability path $p_t$ if the time marginals of the associated CTMC are samples from $p_t$, \ie, $X_t \sim p_t$. Concretely, they should satisfy the Kolmogorov forward equation,
\begin{align}
\label{eq:kfe}
\frac{\partial}{\partial t} p_t(x) = \sum_{y} u_t(x | y) p_t(y) = \underbrace{\sum_{y \neq x} u_t(x|y) p_t(y)}_{\text{flow into $x$}} - \underbrace{\sum_{y \neq x} u_t(y | x) p_t(x)}_{\text{flow out of $x$}}.
\end{align}
That is, the change in probability of being in state $x$ is the total infinitesimal probability flowing into $x$ from other states minus the total infinitesimal probability flowing out of $x$, determined by the rate.

\neuripsvspace{-0.5em}
\subsection{Discrete Flow Matching}
\neuripsvspace{-0.5em}

Discrete Flow Matching (DFM; \citealt{campbell2024generative,gat2024discrete}) is a conceptually simple framework for learning a CTMC-based generative model to transport from a source (\eg noise) distribution $p(x)$ to a target (\eg data) distribution $q(x)$ over a discrete space $x \in \gX$. For now, consider a discrete space over sequences of fixed length $N$, so $\gX = \gT^N$ where $\gT = \{1, \dots, M\}$ denotes a vocabulary of size $M$ containing a discrete set of token values.

Discrete FM training relies on prescribing a \emph{coupling} distribution $\pi(x_0, x_1)$ that samples pairs $(x_0, x_1)$ where the marginals are $p$ and $q$, \ie, 
\begin{align}\label{eq:coupling}
\tsum_{x_0} \pi(x_0, x_1) = q(x_1), \qquad \tsum_{x_1} \pi(x_0, x_1) = p(x_0).
\end{align}
The simplest case is of course the independent coupling $\pi(x_0,x_1) = p(x_0)q(x_1)$.
Further, we would also prescribe a \emph{conditional} CTMC characterized by a conditional rate
\begin{align}\label{eq:cond_ctmc}
    u_t(x | x_t, x_0, x_1) \; \text{generating} \;  p_t(x | x_0, x_1), \;\text{ s.t. } p_0(x | x_0, x_1) = \delta_{x_0}(x),\; p_1(x | x_0, x_1) = \delta_{x_1}(x)
\end{align}
where $\delta$ denotes Kronecker's delta function.
That is, the conditional probability path $p_t(x | x_0, x_1)$ interpolates between two \emph{points} from the source and target.
DFM then trains a generative model that transports according to the marginal probability path $p_t(x)$, which interpolates between the source and target \emph{distributions}.
\begin{align}
p_t(x) = \tsum_{x_0, x_1} p_t(x | x_0, x_1) \pi(x_0, x_1) \quad \text{ implying } \quad p_0(x) = p(x),\; p_1(x) = q(x).
\end{align}
It can be shown that the \emph{marginal} rate
\begin{equation}\label{eq:base_dfm_u}
    u_t(x | x_t) = \mathbb{E}_{p_t(x_0, x_1 | x_t)} u_t(x | x_t, x_0, x_1)
\end{equation}
generates the marginal probability path $p_t(x)$, \ie $u_t(x | x_t)$ characterizes a CTMC that transports from the source $p$ to the target data distribution $q$. In order to train a model to approximate \eqref{eq:base_dfm_u}, prior works have used cross-entropy \citep{gat2024discrete,campbell2024generative} and evidence lower bounds \citep{lou2024discrete,sahoo2024simple,shi2024simplified,shaul2024flow} as training objectives, all of which are captured by the family of Bregman divergences \citep{holderrieth2024generator}.

\textbf{Token-wise mixture paths.}\quad The prescription of \eqref{eq:coupling} and \eqref{eq:cond_ctmc} is then left as a design choice.
Most existing works have focused on the factorized \emph{token-wise} conditional path \citep{gat2024discrete}
\begin{equation}\label{eq:x_factorized_mixture_path}
    p_t(x^i | x_0^i, x_1^i) = (1 - \kappa_t) \delta_{x_0^i}(x^i) + \kappa_t \delta_{x_1^i}(x^i), \;\; u_t(x^i | x_t^i, x_0^i, x_1^i) = \tfrac{\dot{\kappa}_t}{1 - \kappa_t}\left(\delta_{x_1^i}(x^i) - \delta_{x_t^i}(x^i)\right),
\end{equation}
where $\kappa_t$ is a scheduler that satisfies $\kappa_0 = 0$, $\kappa_1 = 1$. The multi-dimensional case is to consider only states that differ by one token, expressed concisely as
\begin{equation}
    p_t(x | x_0, x_1) = \tprod_{i=1}^N p_t(x^i | x_0^i, x_1^i), \;\; u_t(x | x_t, x_0, x_1) = \sum_i \delta_{x_t}(x^{\neg i}) u_t(x^i | x_t^i, x_0^i, x_1^i),
\end{equation}
where $\delta_{x_t}(x^{\neg i}) = \prod_{j\neq i} \delta_{x_t^j}(x^j)$ is a shorthand for denoting that all dimensions except $i$ are the same. That is, this rate is \emph{factorized} in that it only describes token-wise changes, though sampling can be done in parallel~\eqref{eq:forward_ctmc_simulation}. This is a particular advantage of using a continuous-time framework, requiring only a per-dimension parameterization of the model, at the cost of using an iterative procedure for sampling. It has been difficult to generalize beyond the token-wise paths as it can quickly become intractable to prescribe a conditional CTMC \eqref{eq:cond_ctmc} for training that has more general transitions over sequence space \citep{shaul2024flow}.

\textbf{Mask construction.}\quad As noted by many existing works \citep{austin2021structured,lou2024discrete,campbell2024generative}, the simplifying case of considering the source distribution to be a mask distribution has significant theoretical and practical benefits. That is, setting $p_0(x) = \delta_{\dummy}(x)$, where $\dummy$ is a special \textit{mask} token not found in the original vocabulary. Theoretically, this drastically simplifies the construction \citep{sahoo2024simple,shi2024simplified} and practically has been shown to scale \citep{nie2025large,dream2025,inception2025}. The main benefits come from requiring only learning transitions between the mask token and the other tokens, with no transitions between tokens from the original vocabulary.
However, this construction still has multiple downsides, as it does not make full use of the CTMC framework and is equivalent to an any-order autoregressive model \citep{hoogeboom2022autoregressive, pannatier2024sigma} though usually implemented with non-causal attention.
As with all token-wise path constructions, the most glaring downside is the lack of inherent support for variable-length generation. To handle variable length outside of the modeling framework, 
padding can be done during training but the excessive padding makes the model over-confident in predicting padding tokens, an issue that currently relies on semi-autoregressive sampling to get around~\citep{nie2025large}.\looseness=-1

\section{Edit Flows} 
\neuripsvspace{-0.5em}
\subsection{Edit Flows: a continuous-time Markov chain using edit operations}
\neuripsvspace{-0.5em}
\begin{figure}[t]
    \includegraphics[width=\textwidth]{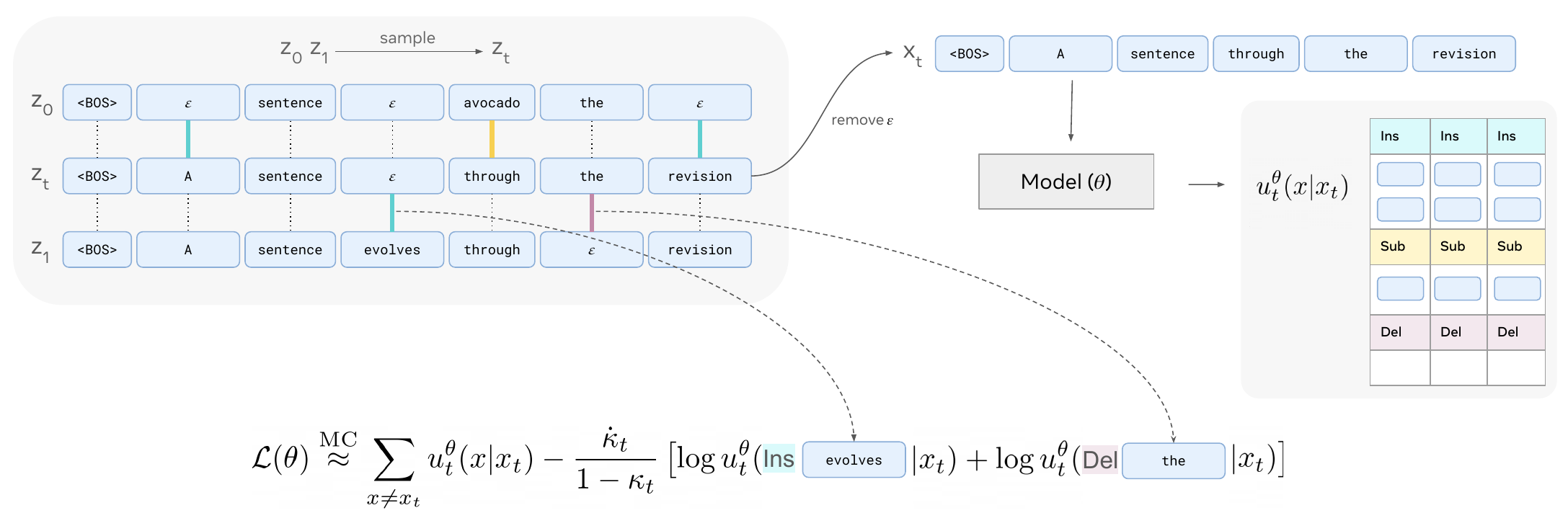}
    \caption{Computing the loss starts with the two aligned sequences $z_0$ and $z_1$. Locations where $z_0^i=\blank$ require an insertion operation, locations where $z_1^i=\blank$ require a deletion and locations where $z_0^i\neq z_1^i$ require a substitution. $z_t$ is sampled by applying a subset of the operations to $z_0$ depending on the scheduler. Then, $x_t$ is obtained by removing all $\blank$ tokens from $z_t$. The Monte-Carlo estimate of the loss contains the model output $u_t^\theta(x | x_t)$ in two terms: the negated sum of all the edit rates and the logarithms of the remaining edits between $z_t$ and $z_1$.}
\end{figure}

We design a new CTMC-based generative model through the Discrete Flow Matching framework using edit operations to enable variable length generation, while encompassing existing constructions as special cases.
Let $\gT$ be as defined previously to be a vocabulary of size $M$. 
Then our state space is defined as the set of all possible sequences up to some maximum length $N$, \ie, $\gX = \bigcup_{n=0}^N \gT^n$. 

We will now describe the Edit Flow model which is a CTMC that operates directly on the space of sequences, and discuss tractable training using a generalization of the DFM recipe later in \Cref{sec:training_edit_flows}. Specifically, we parameterize the rate of a CTMC $u_t^\theta$. For two sequences $x, x_t \in \gX$, $u_t^\theta(x|x_t)$ is allowed non-zero only if $x$ and $x_t$ differ by one \emph{edit operation}.
An edit operation is one of either \textit{insertion}, \textit{deletion}, or \textit{substitution}, which we use to transition between sequences in our generative model. Specifically, given a sequence $x$ with variable length $n(x)$, we define the edit operations that can be performed on $x$ concretely as follows. 

\begin{itemize}[leftmargin=2em]
    \item[$\blacktriangleright$] Let $\ins(x, i, a)$, $x \in \gX, i \in \{1, \dots, n(x)\}, a \in \gT$, be the sequence resulting from inserting the token value $a$ to the right side of position $i$ of the sequence $x$, resulting in
    \neuripsvspace{-0.5em}
    \begin{equation}
        \ins(x, i, a) = (x^1,\dots, x^i, a, x^{i+1}, \dots, x^{n(x)}).
    \end{equation}
    \item[$\blacktriangleright$] Let $\del(x, i)$, $x \in \gX, i \in \{1, \dots, n(x)\}$, be the sequence resulting from deleting the $i$-th token from the sequence $x$, resulting in
    \neuripsvspace{-0.5em}
    \begin{equation}
        \del(x, i) = (x^1,\dots, x^{i-1}, x^{i+1}, \dots, x^{n(x)}).
    \end{equation}
    \item[$\blacktriangleright$] Let $\sub(x, i, a)$, $x \in \gX, i \in \{1, \dots, n(x)\}, a \in \gT$, be the sequence resulting from substituting the token value $a$ into position $i$ of the sequence $x$, resulting in
    \neuripsvspace{-0.5em}
    \begin{equation}
        \sub(x, i, a) = (x^1,\dots, x^{i-1}, a, x^{i+1}, \dots, x^{n(x)}).
    \end{equation}
\end{itemize}
These edit operations define the support of the rate $u_t^\theta(\cdot | x_t)$. Figure \ref{fig:generation} shows an example of a CTMC transitioning through sequences using edit operations.
Since insertions, deletions, and substitutions result in sequences that are mutually exclusive, we can parameterize each separately.
\graybox{%
\begin{alignat}{2}\label{eq:edit_flow_parameterization}
    u_t^\theta(\ins(x, i, a) | x) &= \lambda_{t,i}^\ins(x) Q_{t,i}^\ins(a | x) \qquad && \text{ for } i \in \{1, \dots, n(x)\} \\
    u_t^\theta(\del(x, i) | x) &= \lambda_{t,i}^\del(x) \qquad && \text{ for } i \in \{1, \dots, n(x)\} \\
    u_t^\theta(\sub(x, i, a) | x) &= \lambda_{t,i}^\sub(x) Q_{t,i}^{\sub}(a | x) \qquad && \text{ for } i \in \{1, \dots, n(x)\}
    \label{eq:edit_flow_parameterization_2}
\end{alignat}%
\vspace{-1em}%
}
With this parameterization, the $\lambda_{t,i} \geq 0$ are the total rates of inserting, deleting, or substituting any token at position $i$ and determines the chances of each operation occurring; $Q_{t,i}^\ins(a | x)$ and $Q_{t,i}^\sub(a | x)$ are the (normalized) distributions over token values if an insertion or substitution occurs at position $i$. Equations \eqref{eq:edit_flow_parameterization}-\eqref{eq:edit_flow_parameterization_2} ensure rates are non-negative and the summation to satisfy~\eqref{eq:rate_conditions_general} is more tractable:
\begin{equation}
    u_t^\theta(x_t| x_t) = - \tsum_{i=1}^{n(x_t)} \lambda_{t,i}^\ins(x_t) - \tsum_{i=1}^{n(x_t)} \lambda_{t,i}^\del(x_t) - \tsum_{i=1}^{n(x_t)} \lambda_{t,i}^\sub(x_t).
\end{equation}
Figure \ref{fig:output} shows the model outputs corresponding to \eqref{eq:edit_flow_parameterization}-\eqref{eq:edit_flow_parameterization_2}. 

\paragraph{Special cases.} The framework of Edit Flows actually generalizes many existing constructions, as one can restrict the rates to recover existing discrete generative models. For instance, the token-wise probability paths \eqref{eq:x_factorized_mixture_path} are substitution-only, \ie $\lambda_{t,i}^\ins = \lambda_{t,i}^\del = 0$, with the mask construction having an additional constraint $\lambda_{t,i}^\sub(x) = 0$ if $x^i \neq \dummy$. As such, the token-wise CTMCs are incapable of increasing or decreasing sequence length. An autoregressive model can also be recovered by only allowing insertions to occur at the rightmost location, \ie, all rates are zero except $\lambda_{t,n(x)}^\ins$. As such, the model is incapable of making corrections to the existing sequence other than inserting new tokens in a prescribed order. It can be seen that Edit Flows is a simple yet natural generalization of these existing discrete generative modeling constructions.

\begin{figure}
\resizebox{\textwidth}{!}{
\begin{tabular}{m{0.40\linewidth}|m{0.2\linewidth}m{0.39\linewidth}}
\toprule
\multicolumn{3}{l}{
Generated tokens: \hfill $t=0$ \colorbox[HTML]{000000}{\rule{-0.2em}{0.7em}}\colorbox[HTML]{000608}{\rule{-0.2em}{0.7em}}\colorbox[HTML]{000d11}{\rule{-0.2em}{0.7em}}\colorbox[HTML]{00141a}{\rule{-0.2em}{0.7em}}\colorbox[HTML]{001b22}{\rule{-0.2em}{0.7em}}\colorbox[HTML]{00222b}{\rule{-0.2em}{0.7em}}\colorbox[HTML]{002934}{\rule{-0.2em}{0.7em}}\colorbox[HTML]{00303c}{\rule{-0.2em}{0.7em}}\colorbox[HTML]{003745}{\rule{-0.2em}{0.7em}}\colorbox[HTML]{003e4e}{\rule{-0.2em}{0.7em}}\colorbox[HTML]{004556}{\rule{-0.2em}{0.7em}}\colorbox[HTML]{004c5d}{\rule{-0.2em}{0.7em}}\colorbox[HTML]{005364}{\rule{-0.2em}{0.7em}}\colorbox[HTML]{005a6b}{\rule{-0.2em}{0.7em}}\colorbox[HTML]{006172}{\rule{-0.2em}{0.7em}}\colorbox[HTML]{006879}{\rule{-0.2em}{0.7em}}\colorbox[HTML]{006f80}{\rule{-0.2em}{0.7em}}\colorbox[HTML]{007586}{\rule{-0.2em}{0.7em}}\colorbox[HTML]{007c8d}{\rule{-0.2em}{0.7em}}\colorbox[HTML]{008394}{\rule{-0.2em}{0.7em}}\colorbox[HTML]{00899b}{\rule{-0.2em}{0.7em}}\colorbox[HTML]{008ca0}{\rule{-0.2em}{0.7em}}\colorbox[HTML]{0090a5}{\rule{-0.2em}{0.7em}}\colorbox[HTML]{0093aa}{\rule{-0.2em}{0.7em}}\colorbox[HTML]{0097af}{\rule{-0.2em}{0.7em}}\colorbox[HTML]{009ab5}{\rule{-0.2em}{0.7em}}\colorbox[HTML]{009eba}{\rule{-0.2em}{0.7em}}\colorbox[HTML]{00a1bf}{\rule{-0.2em}{0.7em}}\colorbox[HTML]{00a5c4}{\rule{-0.2em}{0.7em}}\colorbox[HTML]{00a8c9}{\rule{-0.2em}{0.7em}}\colorbox[HTML]{05accd}{\rule{-0.2em}{0.7em}}\colorbox[HTML]{0dafce}{\rule{-0.2em}{0.7em}}\colorbox[HTML]{16b3d0}{\rule{-0.2em}{0.7em}}\colorbox[HTML]{1fb6d2}{\rule{-0.2em}{0.7em}}\colorbox[HTML]{27b9d3}{\rule{-0.2em}{0.7em}}\colorbox[HTML]{30bdd5}{\rule{-0.2em}{0.7em}}\colorbox[HTML]{39c0d7}{\rule{-0.2em}{0.7em}}\colorbox[HTML]{41c4d9}{\rule{-0.2em}{0.7em}}\colorbox[HTML]{4ac7da}{\rule{-0.2em}{0.7em}}\colorbox[HTML]{53cbdc}{\rule{-0.2em}{0.7em}}\colorbox[HTML]{5acdde}{\rule{-0.2em}{0.7em}}\colorbox[HTML]{61cfe0}{\rule{-0.2em}{0.7em}}\colorbox[HTML]{68d0e1}{\rule{-0.2em}{0.7em}}\colorbox[HTML]{6fd2e3}{\rule{-0.2em}{0.7em}}\colorbox[HTML]{76d4e5}{\rule{-0.2em}{0.7em}}\colorbox[HTML]{7dd6e7}{\rule{-0.2em}{0.7em}}\colorbox[HTML]{84d7e8}{\rule{-0.2em}{0.7em}}\colorbox[HTML]{8bd9ea}{\rule{-0.2em}{0.7em}}\colorbox[HTML]{92dbec}{\rule{-0.2em}{0.7em}}\colorbox[HTML]{99DDEE}{\rule{-0.2em}{0.7em}}
 $t=1$
} \\
\midrule
\makecell[bl]{{\color[HTML]{000000}\textbf{def is\_prime(n: int) -> bool:}}\\
\ \ \ {\color[HTML]{19b4d1}\ """}{\color[HTML]{5fcedf}Check}{\color[HTML]{008495}\ if}{\color[HTML]{00a3c1}\ a}{\color[HTML]{5fcedf}\ number}{\color[HTML]{00a8ca}\ is}{\color[HTML]{000000}\ prime}{\color[HTML]{19b4d1}\ or}{\color[HTML]{00a8ca}\ not}{\color[HTML]{0097b0}.}\\
\ \ \ {\color[HTML]{000000}\ """}\\
\ \ \ {\color[HTML]{009db8}\ if}{\color[HTML]{6ad1e2}\ n}{\color[HTML]{00798a}\ \textless{}}\ {\color[HTML]{001c24}2}{\color[HTML]{81d7e8}:}{\color[HTML]{53cbdc}\ return}{\color[HTML]{000000}\ False}\\
\ \ \ {\color[HTML]{008495}\ for}{\color[HTML]{36bfd6}\ i}{\color[HTML]{004a5b}\ in}{\color[HTML]{99DDEE}\ range}{\color[HTML]{003f4f}(}{\color[HTML]{0097b0}2}{\color[HTML]{00a8ca},}{\color[HTML]{5fcedf}\ n}{\color[HTML]{0091a7}):}\\
\ \ \ \ \ \ \ {\color[HTML]{99DDEE}\ if}{\color[HTML]{28bad4}\ n}{\color[HTML]{003340}\ \%}{\color[HTML]{005667}\ i}{\color[HTML]{0097b0}\ ==}\ {\color[HTML]{000000}0}{\color[HTML]{36bfd6}:}{\color[HTML]{5fcedf}\ return}{\color[HTML]{6ad1e2}\ False}\\
\ \ \ {\color[HTML]{8ddaeb}\ return}{\color[HTML]{003340}\ True}\\}
& 
\includegraphics[width=1.0\linewidth]{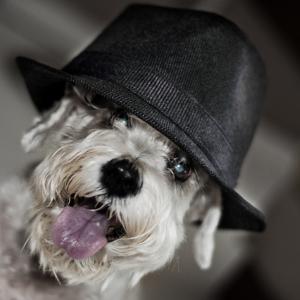} 
&
\makecell[bl]{
{\color[HTML]{0099b2}A}{\color[HTML]{99DDEE}\ small}{\color[HTML]{3fc3d8}\ white}{\color[HTML]{0099b2}\ dog}{\color[HTML]{0aaece}\ we}{\color[HTML]{002a35}aring}{\color[HTML]{6ed2e3}\ a}{\color[HTML]{002a35}\ black}{\color[HTML]{000000}\ hat}\\{\color[HTML]{007f90}\ on}{\color[HTML]{0099b2}\ top}{\color[HTML]{3fc3d8}\ of}{\color[HTML]{99DDEE}\ its}{\color[HTML]{005565}\ head}{\color[HTML]{005565}.}
}
\\
\bottomrule
\end{tabular}
}
\caption{Edit Flow generation examples with $X_0=\emptyset$ (i.e. insert-only model). The tokens are color coded to denote the timestep that they were generated in. Left: Coding model conditioned on the function signature. Right: Image captioning model conditioned on the image.}
\label{fig:color_coded_example}
\end{figure}

\subsection{Training Edit Flows}\label{sec:training_edit_flows}

Since Edit Flows generalizes beyond the token-wise paths that have been previously explored, it cannot easily make use of existing cross-entropy or evidence lower bound objectives for training, as these are difficult or intractable to derive. 
The main difficulty in deriving a conditional rate \eqref{eq:cond_ctmc} that lies in $\gX$ is the need to account for all possible transitions that can transport from one sequence to another, such as multiple possible insertions that transition to the equivalent sequence. 
Instead, we propose an extension of the DFM training recipe to include an auxiliary Markov process, and in doing so, resulting in allowing Bregman divergences for training Edit Flows.

\textbf{Discrete Flow Matching with auxiliary Markov processes.}\quad Suppose we wish to train a CTMC that lies in a space $\gX$ and it follows the marginals of a CTMC that lies in an augmented space $(x,z) \in \gX \times \gZ$ with a probability path $p_t(x, z)$. We show that it is possible to recover the CTMC that transports directly in $\gX$, automatically inferring the auxiliary process in $\gZ$. This is concisely formalized in the following Theorem~\ref{thm:dfm_aux}. Further details and proofs are provided in Appendix~\ref{app:derivations}. We note that in contrast to the original Flow Matching derivation \citep{lipman2024flow}, this result shows that we can marginalize over \emph{time-dependent processes}, not just time-independent variables. Finally, this result is more generally applicable than just training Edit Flows; we showcase another application of Theorem \ref{thm:dfm_aux} in \Cref{app:propagation} to train with localized propagation rates which incentivizes localized edits, going beyond existing independent probability paths.\neuripsloose
\begin{restatable}[Flow Matching with Auxiliary Processes]{thm}{dfmaux} \label{thm:dfm_aux}
Let $u_t(x, z | x_t, z_t)$ be a rate over the augmented space of $\gX \times \gZ$ that generates $p_t(x, z)$, then 
\begin{equation}\label{eq:aux_ut_generates_pt}
    u_t(x | x_t) \triangleq \tsum_{z} \E_{p_t(z_t | x_t)} u_t(x, z | x_t, z_t) \qquad \text{ generates }\qquad p_t(x) \triangleq \tsum_z p_t(x, z),
\end{equation}
and furthermore, for any Bregman divergence $D_\phi(a, b) = \phi(a) - \phi(b) - \langle a - b, \tfrac{\mathrm{d}}{\mathrm{d} b} \phi (b) \rangle$ defined by a convex function $\phi$, we have that
\begin{equation}\label{eq:aux_bregman_divergence_loss}
    \frac{\mathrm{d}}{\mathrm{d} \theta} \E_{x_t, z_t \sim p_t(x, z)} D_\phi\Big( \tsum_{z} u_t(\cdot, z | x_t, z_t), u_t^\theta(\cdot | x_t) \Big) = \frac{\mathrm{d}}{\mathrm{d}\theta} \E_{x_t \sim p_t(x)} D_\phi\left( u_t(\cdot | x_t), u_t^\theta(\cdot | x_t) \right).
\end{equation}
\end{restatable}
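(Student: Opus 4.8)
The plan is to prove the two assertions separately: first that the marginal rate defined in \eqref{eq:aux_ut_generates_pt} satisfies the Kolmogorov forward equation \eqref{eq:kfe} for $p_t(x)$ (and is a valid rate), and then to obtain the gradient identity \eqref{eq:aux_bregman_divergence_loss} by exploiting a linearity property of Bregman divergences.

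For the generation claim, I would start from the defining KFE of the augmented rate: since $u_t(x,z\mid x_t,z_t)$ generates $p_t(x,z)$, we have $\partial_t p_t(x,z) = \sum_{x_t,z_t} u_t(x,z\mid x_t,z_t)\,p_t(x_t,z_t)$. Summing over $z$ and exchanging the finite sum with $\partial_t$ yields an expression for $\partial_t p_t(x)$. Separately, I would expand the proposed marginal rate via $p_t(z_t\mid x_t) = p_t(x_t,z_t)/p_t(x_t)$, so that $\sum_{x_t} u_t(x\mid x_t)\,p_t(x_t) = \sum_{x_t,z_t,z} u_t(x,z\mid x_t,z_t)\,p_t(x_t,z_t)$, where the factor $p_t(x_t)$ cancels against the denominator of the conditional. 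The two expressions coincide, so $u_t(x\mid x_t)$ satisfies the KFE for $p_t(x)$. To finish this part I would check the rate conditions \eqref{eq:rate_conditions_general}: non-negativity for $x\neq x_t$ holds because $x\neq x_t$ forces $(x,z)\neq(x_t,z_t)$ for every $z,z_t$, so each summand $u_t(x,z\mid x_t,z_t)\geq 0$; and $\sum_x u_t(x\mid x_t) = \E_{p_t(z_t\mid x_t)}\sum_{x,z} u_t(x,z\mid x_t,z_t) = 0$ by the augmented rate condition.

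For the gradient identity, the crux is the elementary fact that $\nabla_b D_\phi(a,b) = \nabla^2\phi(b)\,(b-a)$, which is affine — indeed linear — in the first argument $a$; I would record this by differentiating $D_\phi(a,b)=\phi(a)-\phi(b)-\langle a-b,\nabla\phi(b)\rangle$ and observing that the terms linear in $\nabla\phi(b)$ cancel. Setting $b = u_t^\theta(\cdot\mid x_t)$, the only $\theta$-dependent quantity, the chain rule gives $\tfrac{d}{d\theta} D_\phi\big(a, u_t^\theta(\cdot\mid x_t)\big) = \big\langle \nabla^2\phi(b)(b-a), \tfrac{db}{d\theta}\big\rangle$, which is affine in $a$. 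On the left-hand side of \eqref{eq:aux_bregman_divergence_loss} I would factor the joint expectation as $\E_{x_t\sim p_t(x)}\E_{z_t\sim p_t(z_t\mid x_t)}$ and, using affineness in $a$ together with the fact that $b$ does not depend on $z_t$, push the inner conditional expectation onto the first argument $a=\sum_z u_t(\cdot,z\mid x_t,z_t)$.

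The step that ties everything together is the observation that $\E_{z_t\sim p_t(z_t\mid x_t)}\sum_z u_t(\cdot,z\mid x_t,z_t)$ is exactly the marginal rate $u_t(\cdot\mid x_t)$ from \eqref{eq:aux_ut_generates_pt}; substituting it replaces the augmented target by the marginal rate inside $D_\phi$, producing $\tfrac{d}{d\theta}\E_{x_t\sim p_t(x)} D_\phi\big(u_t(\cdot\mid x_t), u_t^\theta(\cdot\mid x_t)\big)$, the right-hand side. The main obstacle — and the step deserving the most care — is justifying the interchange of the $\theta$-derivative with the conditional expectation over $z_t$; it is precisely the linearity of $\nabla_b D_\phi(a,b)$ in $a$ that licenses it, and it would fail for a generic non-Bregman divergence. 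The remaining effort is routine bookkeeping: confirming finiteness of the sums so that differentiation under the sum and Fubini apply, and that the independence of $b$ from $z_t$ lets it pass through the inner expectation untouched.
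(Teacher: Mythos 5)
Your proposal is correct and follows essentially the same route as the paper: part one marginalizes the augmented Kolmogorov forward equation over $z$, inserts $p_t(x_t,z_t)=p_t(z_t\mid x_t)\,p_t(x_t)$ to recognize the marginal rate, and verifies the rate conditions exactly as the paper does; part two rests on the same key fact that the $\theta$-dependent part of $D_\phi(a,b)$ is affine in the first argument $a$, so the conditional expectation over $z_t$ can be absorbed into $a$, turning the augmented target $\sum_z u_t(\cdot,z\mid x_t,z_t)$ into the marginal rate $u_t(\cdot\mid x_t)$. The only difference is bookkeeping: you differentiate $D_\phi$ in $\theta$ explicitly via $\nabla_b D_\phi(a,b)=\nabla^2\phi(b)(b-a)$, which requires $\phi$ twice differentiable and a derivative--expectation interchange (the interchange is justified by finiteness of the sums, not by linearity in $a$ as you suggest), whereas the paper keeps $\tfrac{\mathrm{d}}{\mathrm{d}\theta}$ outside throughout, discards the $\theta$-independent terms $\phi(a)$, and uses only the linearity of the inner-product term $\langle a,\tfrac{\mathrm{d}}{\mathrm{d}b}\phi(b)\rangle$ in $a$ --- slightly weaker hypotheses, same idea.
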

\textbf{Training with an auxiliary alignment process.}\quad As previously mentioned, it is difficult to directly construct a conditional rate \eqref{eq:cond_ctmc} for Edit Flows, even if given points $x_0$ and $x_1$, as there can be multiple sets of edit operations that transitions from $x_0$ to $x_1$. Instead, we can consider an augmented space where a simpler construction exists. In particular, we will define an auxiliary process using \emph{alignments}. 

Given two sequences $x_0$ and $x_1$, an alignment can be used to define a precise set of edit operations that transform $x_0$ to $x_1$. In general, there are many possible alignments for every pair of sequences. For example, below are illustrations of three example alignments between the words `kitten' and `smitten' (the most optimal, a sub-optimal padding-to-the-right strategy, and the least optimal):
\neuripsvspace{-0.3em}
\begin{center}
\begin{minipage}{.25\linewidth}
\centering
\texttt{K $\blank$ I T T E N} \\
\texttt{\subarrow{} \insarrow{} \, \, \, \;\, \,\,\, \;\,} \\
\texttt{S M I T T E N}
\end{minipage}%
\begin{minipage}{.37\linewidth}
\centering
\texttt{K I T T E N $\blank$ $\blank$ $\blank$ $\blank$ $\blank$ $\blank$ $\blank$} \\
\texttt{\subarrow{} \subarrow{} \subarrow{} \;\, \subarrow{} \subarrow{} \insarrow{} \;\, \;\, \;\, \;\, \;\, \;\,} \\
\texttt{S M I T T E N $\blank$ $\blank$ $\blank$ $\blank$ $\blank$ $\blank$}
\end{minipage}%
\begin{minipage}{.37\linewidth}
\centering
\texttt{K I T T E N $\blank$ $\blank$ $\blank$ $\blank$ $\blank$ $\blank$ $\blank$} \\
\texttt{\delarrow{} \delarrow{} \delarrow{} \delarrow{} \delarrow{} \delarrow{} \insarrow{} \insarrow{} \insarrow{} \insarrow{} \insarrow{} \insarrow{} \insarrow{}} \\
\texttt{$\blank$ $\blank$ $\blank$ $\blank$ $\blank$ $\blank$ S M I T T E N}
\end{minipage}
\end{center}
\neuripsvspace{-0.3em}
The special token $\blank$ is a \emph{blank} token that \textit{is not added to the vocabulary}, \ie, it is not part of the input or output of the model. Instead, we will only use it to define an auxiliary process that will provide a training signal for Edit Flows via Theorem \ref{thm:dfm_aux}.
As can be seen, given an alignment, we can recover edit operations as tuples $(a \rightarrow b)$ with $a,b \in \gT \cup \{\blank\}$, interpreted as an {\inscolor{insertion}} if $a = \blank$, a {\delcolor{deletion}} if $b = \blank$, or a {\subcolor{substitution}} if $a \neq \blank$ and $b \neq \blank$. 

Formally, let us define the space of aligned sequences as $\mathcal{Z} = (\gT \cup \{\blank\})^N$. Furthermore, we define the function $\strip: \gZ \rightarrow \gX$ as the operation of stripping away all the $\blank$ tokens. Note that since this is a many-to-one function, this implies $|\gX| < |\gZ|$.  Following the DFM recipe, we would need to prescribe a coupling $\pi$ and a conditional CTMC that transports from point to point. Given samples from the source $x_0 \sim p(x)$ and target $x_1 \sim q(x)$ in $\gX$, we can directly construct aligned sequences $z_0$ and $z_1$ in $\gZ$, \eg, by randomly padding the sequences, or by solving for the optimal alignment that corresponds to the minimal edit distance. This defines a coupling $\pi(z_0, z_1)$ over the auxiliary variables satisfying the correct marginal distributions
\begin{equation}
    p(x) = \tsum_{z_0} \tsum_{z_1} \pi(z_0, z_1) \delta_{\strip(z_0)}(x), \qquad q(x) = \tsum_{z_0} \tsum_{z_1} \pi(z_0, z_1) \delta_{\strip(z_1)}(x).
\end{equation}
Then, given $z_0, z_1 \sim \pi$, we define a conditional probability path over the augmented space of $\gX \times \gZ$
\begin{equation}
    p_t(x, z | x_0, z_0, x_1, z_1) = p_t(x, z | z_0, z_1) = p_t(z | z_0, z_1) \delta_{\strip(z)}(x),
\end{equation}
where $p_t(z | z_0, z_1)$ is a token-wise mixture probability path ($\ref{eq:x_factorized_mixture_path}$).
A conditional rate that transports along the augmented probability path is then given by (see Lemma~\ref{lem:determinstic_rates})
\begin{equation}\label{eq:cond_ut}
    u_t(x, z | x_t, z_t, z_0, z_1) =
    \delta_{\strip(z)}(x) \tsum_{i=1}^N \tfrac{\dot{\kappa}_t}{1 - \kappa_t} (\delta_{z_1^i}(z^i) - \delta_{z_t^i}(z^i) ) \delta_{z_t}(z^{\neg i})
\end{equation}
Note that this rate only transports between sequences $x_t \rightarrow x$ that \emph{differ by one edit operation}, perfectly mapping to Edit Flow's transitions \eqref{eq:edit_flow_parameterization}-\eqref{eq:edit_flow_parameterization_2}. 
Applying Theorem \ref{thm:dfm_aux}, the marginal rate that transports from $p(x)$ to $q(x)$ can be expressed as
\begin{align}
    u_t(x | x_t) = \tsum_z \E_{p_t(z_0, z_1, z_t | x_t)} u_t(x, z | x_t, z_t, z_0, z_1),
\end{align}
which we learn using a Bregman divergence as the training loss (see \Cref{app:training_loss})\citep{holderrieth2024generator}, simplifying to
\graybox{%
\arxivvspace{0.5em}%
\begin{equation}
    \mathcal{L}(\theta) = \E_{t, \stackrel{\pi(z_0, z_1)}{p_t(x_t, z_t | z_0, z_1)}} \left[ \sum_{x\neq x_t} u_t^\theta(x | x_t) - \sum_{i=1}^N \1_{[z_1^i \neq z_t^i]} \frac{\dot{\kappa}_t}{1 - \kappa_t}  \log u_t^\theta(x(z_t, i, z_1^i) | x_t) \right]
\end{equation}%
\vspace{-0.4em}%
}
where $x(z_t, i, z_1^i) = \strip(z_t^1,\dots,z_t^{i-1},z_1^i,z_t^{i+1},\dots,z_t^N)$, which directly corresponds to one of the edit operations in \eqref{eq:edit_flow_parameterization}-\eqref{eq:edit_flow_parameterization_2}. This loss can be interpreted as minimizing all the output rates of the model, while having a weighted cross-entropy over edit operations that bring $x_t$ closer to $x_1$. 

Interestingly, even when trained with the least optimal alignment, which deletes all tokens from $x_0$ and inserts all tokens in $x_1$, the trained model has a preference towards minimizing the number of edits during its generation process (see \Cref{app:minimizing_edit_distance}), learning a non-trivial coupling between $x_0$ and $x_1$. This is analogous to the kinetic energy minimization that is observed for Flow Matching in continuous space \citep{shaul2023kinetic}.
\subsection{Algorithms and advanced techniques for Edit Flows} 
In this section, we provide details on the sampling procedure and advanced techniques that make use of the Edit Flows framework. We only provide a summary of each technique here, focusing on the resulting algorithmic procedures and high-level intuition; complete details are in \Cref{app:advanced_edit_flows}.

\textbf{Sampling.}\quad Sampling from the model requires transporting a source sample $X_0 \sim p$ to time $t=1$, simulating the CTMC defined with the learned rate $u_t^{\theta}$. Following previous works \citep{campbell2022continuous,gat2024discrete}, we leverage the first-order approximation in \eqref{eq:forward_ctmc_simulation}. 
Sampling thus iterates: with current state $X_t$ and step size $h$, independently determine whether each insertion, deletion and substitution, occurs with probability $h\lambda_{t,i}(X_t)$, then perform all edit operations simultaneously.

\textbf{Classifier-free guidance.}\quad We considered a few approaches to add classifier-free guidance (CFG; \citealt{ho2022classifier}) to Edit Flows. The scheme that we found to be the most reliable, and which we use throughout all experiments, is to apply CFG independently to $\lambda$ and $Q$.

\textbf{Sharpening $Q$.}\quad We also explored ad-hoc adjustments to the $Q$ distributions, such as temperature, top-$p$ and top-$k$ sampling, generally intended to sharpen the distribution over the most likely values.

\textbf{Reverse rates.}\quad We can also formulate and learn a CTMC that transports from $q$ to $p$. We call this a reverse rate $\cev{u}_t^{\theta}$ as we apply it in reverse time, from $t=1$ to $t=0$. Combining the forward and reverse rates allows us to introduce a stationary component that corrects the samples but does not modify the distribution of the samples, \textit{introducing extra inference-time computation for the ability to self-correct during sampling}.
When applied in practice, we take a step forwards in time with $u^{\theta}_t$ to $t+h(1+\alpha_t)$ for $\alpha_t > 0$ followed by a step in reverse time with $\cev{u}^{\theta}_{t+h(1+\alpha_t)}$ back to $t+h$.

\textbf{Localized edit operations.}\quad The default rates that we use for the alignments $z_t$ have been factorized per token \eqref{eq:cond_ut}, resulting in independent edit operations. 
While this allows the use of conditional rates from prior work \eqref{eq:x_factorized_mixture_path}, this could be problematic for Edit Flows as when the sequence length becomes large, noisy sequences $x_t$ will consist of non-neighboring tokens. Instead, we propose a non-factorized locality-based construction in which \textit{if an edit operation has occurred, it incites nearby edit operations to occur}, thereby encouraging locally consistent subsequences in $x_t$. We construct this by creating a novel auxiliary CTMC that locally propagates the occurrence of edit operations in $\gZ$ space, and applying Theorem \ref{thm:dfm_aux} to easily obtain a tractable training objective. All details can be found in \Cref{app:propagation}. We find localized Edit Flow models to be especially more performant at generating long sequences, leading to a 48\% increase in Pass@1 on code generation.

\neuripsvspace{-0.5em}
\section{Related work}
\neuripsvspace{-0.5em}

\textbf{Discrete diffusion and flows for language modeling.}\quad Generative models based on iterative refinement such as diffusion \citep{sohl2015deep,ho2020denoising} and flow models~\citep{lipman2024flow} have seen their fair share of discrete adaptations. Both aim to learn a CTMC-based generative model but approach the construction differently. Discrete diffusion models typically start with a corruption process which is then reversed \citep{austin2021structured,lou2024discrete}. 
Discrete flow models, in contrast, aim to transport between two distributions with an interpolating scheme \citep{campbell2024generative,gat2024discrete}. 
With the DFM framework, \citet{shaul2024flow} also proposed new ways of constructing general discrete token-wise paths. 
However, despite the large design space, none have been able to reliably surpass the simple mask construction, which has been the core focus of many recent works \citep{sahoo2024simple,shi2024simplified,ou2024your,zheng2024masked}, motivated by the success of masked language modeling \citep{devlin2019bert,ghazvininejad2019mask,yang2019xlnet,chang2022maskgit}.
In particular, the mask construction has shown to perform well at scale, though it is currently still shy of autoregressive models on code generation tasks and requires heuristic or semi-autoregressive sampling schemes~\citep{nie2025large,dream2025,inception2025}. 
In stark contrast, we explored in the opposite direction, making full use of the CTMC-based construction instead of simplifying it. This allowed us to generalize the existing DFM construction to enable variable-length generation and construct a model using position-relative edits as a generative process.\neuripsloose   

\textbf{Non-autoregressive variable length generation.}\quad When the generative modeling framework does not inherently allow variable length generation, such as many non-autoregressive approaches, the stereotypical method of handling it is to utilize a separate length prediction model (\eg \citealt{lee2018deterministic}). 
More integrated approaches have considered edit operations, though many of the existing constructions are heuristic-based and do not show that they properly sample from the target distribution.
Levenshtein Transformer \citep{gu2019levenshtein} and DiffusER~\citep{reid2022diffuser} are edit-based sequence generation models.
They consider a sequential expert policy that performs a series of edits at each step, and the model is trained through imitation learning. 
Unlike Edit Flows, DiffusER uses a causal masked model \citep{aghajanyan2022cm3} to fill in insertions and substitutions autoregressively and is trained to match a discrete-time corruption process that is sequentially simulated. 
\citet{chan2020imputer} considers sequence alignments using only deletion operations and leverages marginalization over latent alignments.  \citet{gu2019insertion} and \citet{stern2019insertion} propose insertion-only models that sequentially predict what and where to insert tokens. 
The most similar work to ours is perhaps \citet{campbell2024trans}, who proposed modeling inserts in a jump diffusion framework, relying on generator theory and evidence lower bounds for training. However, extending this direct derivation approach to more than a singular insertion, and to introduce deletions and substitutions, is very challenging and arguably intractable;
an issue that we got around by making simple use of Theorem~\ref{thm:dfm_aux}.

\textbf{Relative positions for language modeling.}\quad There is a growing trend to incorporate only relative positional information into neural network architectures \citep{liutkus2021relative,press2021train,peebles2023scalable,su2024roformer,ding2024longrope}. However, on the methods side, there has not yet been a shift due to non-autoregressive models mainly using a token-wise construction. As such, every token generated must also account for the exact position (\eg, exact number of neighboring mask tokens) when deciding on a token value. Edit Flows is one of the first models to use only relative and localized operations in the method construction, sample generation time, and in the architecture. Beyond the capability of variable length generation, enabling the use of position-relative generation may be a key advancement and could be the underlying reason that allows Edit Flows to outperform methods based on absolute positioning.

\textbf{Iterative editing models.}\quad Several prior works on constrained generation employ iterative editing or sampling procedures. \citet{welleckgenerating} decouple an existing language generator from a learned iterative corrector that refines its outputs, whereas Edit Flows uses a single model to begin from a random or null sequence and directly generate outputs through a sequence of discrete token edits. \citet{miao2019cgmh} employ Metropolis–Hastings sampling over insertion/deletion/replacement operations to satisfy lexical constraints; by contrast, Edit Flows deterministically takes a fixed number of flow-matching steps from noise to data without an acceptance criterion. \citet{qin2022cold} propose COLD decoding, an energy-based approach that iteratively refines whole sequences via Langevin dynamics under constraints, whereas Edit Flows incrementally edits tokens rather than resampling full sequences. Finally, \citet{sha2020gradient} formulates lexically-constrained generation as a gradient-guided optimization problem using a differentiable fluency objective to guide edits, but Edit Flows requires no external objective or backpropagation at test time. These contrasts underscore that Edit Flows integrates generation and editing within a single flow-matching model, rather than relying on separate generation and correction modules or auxiliary sampling schemes. 

\neuripsvspace{-0.5em}
\section{Experiments}
\neuripsvspace{-0.5em}
\begin{table*}\centering
\ra{1.1}
\resizebox{0.9\textwidth}{!}{%
\begin{tabular}{@{}l c c c c c c c}\toprule
\multirow{2}{*}{Method} & & \multicolumn{3}{c}{MS COCO} & \multicolumn{3}{c}
{Image Captioning 3M}
\\
\cmidrule(r){3-5}\cmidrule(r){6-8}
& & METEOR & CIDEr & SPICE & ROUGE-L & CIDEr & SPICE \\
\midrule 
VLP$^\S$ {\scriptsize \citep{zhou2020unified}} &  & 28.4 & 117.7 & 21.3 & 24.3 & 77.5 & 16.5 \\
ClipCap$^\S$ {\scriptsize \citep{mokady2021clipcap}} &  & 27.1 & 108.3 & 20.1 & 26.7 & 87.2 & 18.5 \\[0.5ex]
\hdashline\noalign{\vskip 0.5ex}
Llama3 Autoregressive &  & 25.7 & 95.5 & 19.6 & 25.2 & 85.8 & 17.8 \\
Mask DFM & & 25.3 & 95.6 & 19.2 & 27.4 & 96.2 & 20.3 \\
Edit Flow (\textbf{Ours}) & & \cellhii 27.4 & \cellhii 108.1 & \cellhi 21.1 & \cellhii 29.0 & \cellhii 101.9 & \cellhii 21.7 \\
Localized Edit Flow (\textbf{Ours}) & & \cellhii 27.4 & \cellhi 105.1 & \cellhii 22.1 & \cellhi 28.3 & \cellhi 99.7 & \cellhi 20.8 \\
\bottomrule
\end{tabular}
}
\caption{Image captioning benchmarks using Llama3 280M models. $^\S$These works used pretrained models that were trained on larger amount of data and cannot be directly compared; they are shown for reference only. Colors show the best and second best among each metric.}
\label{tab:image_captioning}
\end{table*}

We experimentally validate the performance of Edit Flows on multiple text generation tasks, including image-to-text generation using 280M models, text and code generation benchmarks with 1.3B models.

\textit{Baselines.}\quad We primarily compare against a state-of-the-art \textbf{Autoregressive} model \citep{vaswani2017attention, touvron2023llama} with standard left-to-right generation, and \textbf{Mask DFM} \citep{gat2024discrete} which is the most relevant and best performing non-autoregressive framework currently for text generation, equivalent to discrete mask diffusion models. 

\textit{Models.}\quad We test two variants of our models with different $p(X_0)$. 
For the default \textbf{Edit Flow} we use $p=\delta_\emptyset$ so that the flow generates using a combination of insertions and deletions, with the forward and reverse rates, respectively.
A variant \textbf{Uniform $X_0$ + Edit Flow} use $X_0=(X^1, X^2, \ldots, X^{100}) \text{ where } X^i \sim p_{\text{emp}}$, with $p_{\text{emp}}$ being the (marginalized) empirical distribution of the tokens in the training set. When constructing the alignment between $z_0$ and $z_1$, 50 of the initial tokens are deleted and the other 50 are substituted, with the remaining tokens inserted. Finally, a \textbf{Localized Edit Flow} that makes use of a localized propagation process \Cref{app:propagation}, which encourages localized edits during generation.

\textit{Architecture and hyperparameters.}\quad We use 280M and 1.3B parameter variants of the Llama architecture \citep{grattafiori2024llama,touvron2023llama} for all of our models and baselines. The maximum sequence length during training is set to \verb|1024| tokens for all models. The Autogressive baseline uses causal attention, while the Mask DFM and Edit Flow models use full self-attention, including an additional token encoding the value of $t$. 
For Edit Flow, we use FlexAttention \citep{dong2024flex} to handle batches of variable lengths, allowing us to not require special padding tokens and significantly increasing token efficiency during training. In our experiments, Edit Flows are able ingest $3\times$ more training data per iteration while using the same compute and memory as Mask DFM.
We train all models and baselines using the same compute budget for fair comparison.
We use a cubic scheduler $\kappa_t = t^3$ for Edit Flows and Mask DFM, which we found to perform better than the linear scheduler as also observed by \citet{gat2024discrete}. Further hyperparameter details are in \Cref{app:hyperparameters}.\looseness=-1

\begin{table*}\centering
\ra{1.1}
\begin{tabular}{@{}l c c c c c c c}\toprule
 Method & HellaSwag & ARC-E & ARC-C & PIQA & OBQA & WinoGrande \\
\midrule 
 Llama3 Autoregressive & 49.5 & 71.0 & 36.3 & 76.0 & 30.4 & 62.1 \\
 Mask DFM & 38.3 & 55.4 & 27.8 & 65.3 & 22.6 & 52.3 \\
 Edit Flow (\textbf{Ours}) &  49.0 &  63.1 & 33.0 & 68.8 &  28.6 &  53.6 \\
\bottomrule
\end{tabular}
\caption{Zero-shot text benchmarks using Llama3 1.3B parameter models trained on DCLM-baseline 1.0 \citep{li2024datacomplm}.}
\label{tab:text_benchmarks}
\end{table*}

\begin{table*}\centering
\ra{1.1}
\resizebox{\textwidth}{!}{%
\begin{tabular}{@{}l l c c c c c c}\toprule
& \multirow{2}{*}{Method} & \multicolumn{2}{c}{HumanEval} & \multicolumn{2}{c}{HumanEval+} & \multicolumn{2}{c}{MBPP} \\
\cmidrule(r){3-4} \cmidrule(r){5-6} \cmidrule(r){7-8}
&  & Pass@1 & Pass@10 & Pass@1 & Pass@10 & Pass@1 & Pass@10 \\
\midrule 
&  Autoregressive {\scriptsize \citep{gat2024discrete}} & 14.3 & 21.3 & & & 17.0 & 34.3 \\
&  Llama3 Autoregressive$^\dagger$ & 17.0 & 34.7 & 14.0 & 28.6 & 25.6 & 45.4 \\
\hdashline\noalign{\vskip 0.5ex}
\parbox[t]{0.1mm}{\multirow{6}{*}{\rotatebox[origin=c]{90}{\color{myyellow!80} Non-AR }}} 
&  Mask DFM {\scriptsize \citep{gat2024discrete}} & 6.7 & 13.4 & & & 6.7 & 20.6 \\
&  Mask DFM {\scriptsize (Oracle Length) \citep{gat2024discrete}} & 11.6 & 18.3 & & & \cellhj 13.1 & 28.4 \\
&  Mask DFM$^\dagger$ & 9.1 & 17.6 & 7.9 & 13.4 & 6.2 & 25.0 \\
&  Uniform $X_0$ + Edit Flow (\textbf{Ours}) & 9.7 & \cellhj 24.3 & 9.7 & \cellhj 19.5 & 9.4 & 33.4 \\
&  Edit Flow (\textbf{Ours}) & \cellhj 12.8 & \cellhjj 24.3 & \cellhjj 10.4 & \cellhjj 20.7 & 10.0 & \cellhjj 36.4 \\
& Localized Edit Flow (\textbf{Ours}) & \cellhjj 14.0 & 22.6 & \cellhj 10.4 & 18.9 & \cellhjj 14.8 & \cellhj 34.0 \\
\bottomrule
\end{tabular}
}
\caption{Code generation benchmarks using Llama3 1.3B parameter models trained on the CodeLlama \citep{roziere2023code} datamix. $^\dagger$Superscript denotes our own implementation. We highlight the best non-autoregressive models, where colors show the best and second best among each metric.}
\label{tab:code_benchmarks}
\end{table*}

\textbf{Image captioning.}\quad We train on the task of image to text generation, using image captioning datasets for training and validation. Specifically, we train from scratch on the MS COCO dataset (\citealt{lin2014microsoft}; CC-BY 4.0) and an image captioning dataset containing 3M image-caption pairs.
Results are shown in \Cref{tab:image_captioning}, where we also provide prior works as references that used large pretrained models. By training on the larger
Image Captioning 3M dataset, our models can match the performance of these references. We see that for generation of short sequences such as captions, non-autoregressive models can be better than autoregressive models. Furthermore, we see a sizeable improvement in performance from using our Edit Flow models. We attribute this improvement to the native capabilities of handling variable lengths. We see that the Localized Edit Flow performs on par but does not outperform the default Edit Flow, which is expected for short length generation. Examples of the generation process are shown in \Cref{fig:image_caption_generation}.\neuripsloose

\textbf{Text benchmarks.}\quad For text benchmarks, we trained our models using the DCLM baseline 1.0 (\citealt{li2024datacomplm}; CC-BY 4.0) dataset. We show the results for common text benchmarks in \Cref{tab:text_benchmarks}. Following \citep{nie2025large}, we perform CFG during evaluation, which has multiple ways to be extended when applied to general CTMC processes. 
The Edit Flow model is significantly better than the Mask DFM model, but it is slightly behind Autoregressive.\neuripsloose

\textbf{Code benchmarks.}\quad For the code generation benchmarks, we used the CodeLlama datamix \citep{roziere2023code}. Results are shown in \Cref{tab:code_benchmarks}. As additional baselines, we compare against the results reported by \citet{gat2024discrete}, which includes an oracle where the ground truth length is provided to the model. Interestingly, we see that Edit Flows can outperform even the model with oracle length provided. We note that on such large scale data sets, the lengths of the sequence seen during training are not very informative and we need to crop sequences to a maximum length anyhow (see Figures \ref{fig:data_lengths1},\ref{fig:data_lengths2}); however, the ability of Edit Flows to generate and process using only relative positions still gives Edit Flow a superior edge. Furthermore, our Edit Flow models are competitive with the Autoregressive model reported by \citet{gat2024discrete}, though it still falls short compared to our own implementation. An interesting result is that the Localized Edit Flow model significantly outperforms the other non-autoregressive models on MBPP, which is known to require generating long sequences of code, with a \textit{relative improvement of 48\% at Pass@1} over the non-localized Edit Flow and a 138\% relative improvement over Mask DFM.
\section{Limitations}

We identify two key limitations in our empirical results.

The pre-training configuration we employed favors autoregressive models. During autoregressive training, the model is exposed to all possible conditioning contexts within the input sequence. In contrast, we selected a random subset of the input to serve as conditioning. If this randomly chosen subset does not closely align with the evaluation scenario, the resulting learning signal may be diminished.

The text benchmarks used in our study focus on likelihood estimation rather than text generation. Since non-autoregressive models lack a closed-form expression for likelihood, we relied on the best available alternative—a noisy estimate of the ELBO. Although this provides a reasonable approximation, these benchmarks do not directly assess the quality of the generated text produced by the models.

\neuripsvspace{-0.8em}
\section{Conclusion}
\neuripsvspace{-0.5em}
Edit Flows operate using position-relative edit operations and naturally support variable-length generation. By modeling sequence generation as a CTMC, our approach captures expressive sequence-level transition dynamics without relying on rigid, factorized processes. Empirically, Edit Flows show consistent improvement over the mask construction across a range of large scale benchmarks. In our initial results, they surpass autoregressive models in image captioning and but fall slightly behind them in text-benchmarks and code generation. 
However, many training pipelines and benchmarks are designed for autoregressive models, and we believe that further efforts can significantly boost performance.

\bibliographystyle{plainnat}
\bibliography{paper}


\appendix

\clearpage

\section{Qualitative Examples}

\begin{figure}[H]
    \centering
    \resizebox{!}{0.4\textheight}{
    \begin{tabular}{c c l}
        \toprule
        \textbf{Input Image} & & \textbf{Edit Flow caption generation process} \\
        \midrule
        \includegraphics[width=0.25\linewidth]{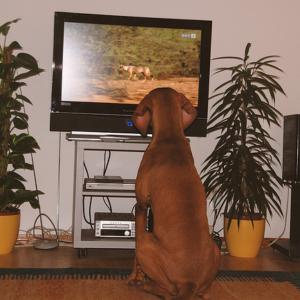} & & 
        \vspace{0.2em}
        \makecell[bl]{
        dog\\
        dog.\\
        dog television.\\
        dog a television.\\
        doges an animal a television.\\
        A doges of an animal a television.\\
        A brown and doges of an animal a television.\\
        A brown and dog watches of an animal a television.\\
        A brown and white dog watches of an on a television.\\
        A brown and white dog watches an image of an animal on a television.
        }
        \vspace{0.2em}
        \\
        \hline
        \includegraphics[width=0.25\linewidth]{assets/images/00950.jpg} & &
        \vspace{0.2em}
        \makecell[bl]{
        black\\
        black hat\\
        aring black hat\\
        dogaring black hat head.\\
        dogaring black hat on head.\\
        A dogaring black hat on top head.\\
        A dog wearing black hat on top head.\\
        A white dog wearing black hat on top of head.\\
        A white dog wearing a black hat on top of head.\\
        A small white dog wearing a black hat on top of its head.\\
        }
        \vspace{0.2em}
        \\
        \hline
        \includegraphics[width=0.25\linewidth]{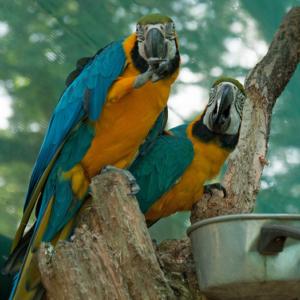} & &
        \vspace{0.2em}
        \makecell[bl]{
        a\\
        a tree\\
        a tree a\\
        a a tree a\\
        a of a tree a\\
        a close of on a tree a\\
        a close up of birds on a tree branch a\\
        a close up of birds on a tree branch a pot\\
        a close up of birds on a tree branch with a pot\\
        }
        \vspace{0.2em}
        \\
        \hline
        \includegraphics[width=0.25\linewidth]{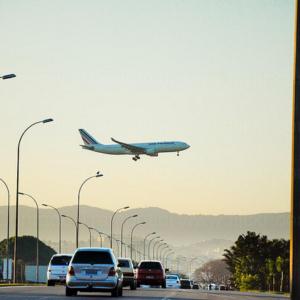} & &
        \vspace{0.2em}
        \makecell[bl]{
        over\\
        over.\\
        lies over a.\\
        lies over a street.\\
        Anlies over a street.\\
        Anlies over a street cars.\\
        Anlies over street with cars.\\
        An flies over street with cars.\\
        An air flies over street with cars.\\
        An airplane flies over street with cars.\\
        An airplane flies over a street with cars.\\
        }
        \vspace{0.2em}
        \\
        \bottomrule
    \end{tabular}
    }
    \vspace{0.2em}
    \caption{Example input images and the stochastic sequential generation of captions from an Edit Flows model.}
    \label{fig:image_caption_generation}
\end{figure}

\begin{figure}
\resizebox{!}{0.45\textheight}{
\begin{tabular}{m{1.0\linewidth}}
\toprule
\multicolumn{1}{l}{
Generated tokens: \hfill $t=0$ \colorbox[HTML]{000000}{\rule{-0.2em}{0.7em}}\colorbox[HTML]{000608}{\rule{-0.2em}{0.7em}}\colorbox[HTML]{000d11}{\rule{-0.2em}{0.7em}}\colorbox[HTML]{00141a}{\rule{-0.2em}{0.7em}}\colorbox[HTML]{001b22}{\rule{-0.2em}{0.7em}}\colorbox[HTML]{00222b}{\rule{-0.2em}{0.7em}}\colorbox[HTML]{002934}{\rule{-0.2em}{0.7em}}\colorbox[HTML]{00303c}{\rule{-0.2em}{0.7em}}\colorbox[HTML]{003745}{\rule{-0.2em}{0.7em}}\colorbox[HTML]{003e4e}{\rule{-0.2em}{0.7em}}\colorbox[HTML]{004556}{\rule{-0.2em}{0.7em}}\colorbox[HTML]{004c5d}{\rule{-0.2em}{0.7em}}\colorbox[HTML]{005364}{\rule{-0.2em}{0.7em}}\colorbox[HTML]{005a6b}{\rule{-0.2em}{0.7em}}\colorbox[HTML]{006172}{\rule{-0.2em}{0.7em}}\colorbox[HTML]{006879}{\rule{-0.2em}{0.7em}}\colorbox[HTML]{006f80}{\rule{-0.2em}{0.7em}}\colorbox[HTML]{007586}{\rule{-0.2em}{0.7em}}\colorbox[HTML]{007c8d}{\rule{-0.2em}{0.7em}}\colorbox[HTML]{008394}{\rule{-0.2em}{0.7em}}\colorbox[HTML]{00899b}{\rule{-0.2em}{0.7em}}\colorbox[HTML]{008ca0}{\rule{-0.2em}{0.7em}}\colorbox[HTML]{0090a5}{\rule{-0.2em}{0.7em}}\colorbox[HTML]{0093aa}{\rule{-0.2em}{0.7em}}\colorbox[HTML]{0097af}{\rule{-0.2em}{0.7em}}\colorbox[HTML]{009ab5}{\rule{-0.2em}{0.7em}}\colorbox[HTML]{009eba}{\rule{-0.2em}{0.7em}}\colorbox[HTML]{00a1bf}{\rule{-0.2em}{0.7em}}\colorbox[HTML]{00a5c4}{\rule{-0.2em}{0.7em}}\colorbox[HTML]{00a8c9}{\rule{-0.2em}{0.7em}}\colorbox[HTML]{05accd}{\rule{-0.2em}{0.7em}}\colorbox[HTML]{0dafce}{\rule{-0.2em}{0.7em}}\colorbox[HTML]{16b3d0}{\rule{-0.2em}{0.7em}}\colorbox[HTML]{1fb6d2}{\rule{-0.2em}{0.7em}}\colorbox[HTML]{27b9d3}{\rule{-0.2em}{0.7em}}\colorbox[HTML]{30bdd5}{\rule{-0.2em}{0.7em}}\colorbox[HTML]{39c0d7}{\rule{-0.2em}{0.7em}}\colorbox[HTML]{41c4d9}{\rule{-0.2em}{0.7em}}\colorbox[HTML]{4ac7da}{\rule{-0.2em}{0.7em}}\colorbox[HTML]{53cbdc}{\rule{-0.2em}{0.7em}}\colorbox[HTML]{5acdde}{\rule{-0.2em}{0.7em}}\colorbox[HTML]{61cfe0}{\rule{-0.2em}{0.7em}}\colorbox[HTML]{68d0e1}{\rule{-0.2em}{0.7em}}\colorbox[HTML]{6fd2e3}{\rule{-0.2em}{0.7em}}\colorbox[HTML]{76d4e5}{\rule{-0.2em}{0.7em}}\colorbox[HTML]{7dd6e7}{\rule{-0.2em}{0.7em}}\colorbox[HTML]{84d7e8}{\rule{-0.2em}{0.7em}}\colorbox[HTML]{8bd9ea}{\rule{-0.2em}{0.7em}}\colorbox[HTML]{92dbec}{\rule{-0.2em}{0.7em}}\colorbox[HTML]{99DDEE}{\rule{-0.2em}{0.7em}}
 $t=1$
} \\
\midrule
\makecell[bl]{{\color[HTML]{000000}def}{\color[HTML]{000000}\ trunc}{\color[HTML]{000000}ate}{\color[HTML]{000000}\_}{\color[HTML]{000000}number}{\color[HTML]{000000}(}{\color[HTML]{000000}number}{\color[HTML]{000000}:}{\color[HTML]{000000}\ float}{\color[HTML]{000000})}{\color[HTML]{000000}\ -\textgreater{}}{\color[HTML]{000000}\ float}{\color[HTML]{000000}:}\\
\ \ \ {\color[HTML]{000000}\ """}{\color[HTML]{000000}\ Given}{\color[HTML]{000000}\ a}{\color[HTML]{000000}\ positive}{\color[HTML]{000000}\ floating}{\color[HTML]{000000}\ point}{\color[HTML]{000000}\ number}{\color[HTML]{000000},}{\color[HTML]{000000}\ it}{\color[HTML]{000000}\ can}{\color[HTML]{000000}\ be}{\color[HTML]{000000}\ decom}{\color[HTML]{000000}posed}{\color[HTML]{000000}\ into}\\
\ \ \ {\color[HTML]{000000}\ and}{\color[HTML]{000000}\ integer}{\color[HTML]{000000}\ part}{\color[HTML]{000000}\ (}{\color[HTML]{000000}larg}{\color[HTML]{000000}est}{\color[HTML]{000000}\ integer}{\color[HTML]{000000}\ smaller}{\color[HTML]{000000}\ than}{\color[HTML]{000000}\ given}{\color[HTML]{000000}\ number}{\color[HTML]{000000})}{\color[HTML]{000000}\ and}{\color[HTML]{000000}\ dec}{\color[HTML]{000000}im}{\color[HTML]{000000}als}\\
\ \ \ {\color[HTML]{000000}\ (}{\color[HTML]{000000}lef}{\color[HTML]{000000}to}{\color[HTML]{000000}ver}{\color[HTML]{000000}\ part}{\color[HTML]{000000}\ always}{\color[HTML]{000000}\ smaller}{\color[HTML]{000000}\ than}\ {\color[HTML]{000000}1}{\color[HTML]{000000}).}\\
\\
\ \ \ {\color[HTML]{000000}\ Return}{\color[HTML]{000000}\ the}{\color[HTML]{000000}\ decimal}{\color[HTML]{000000}\ part}{\color[HTML]{000000}\ of}{\color[HTML]{000000}\ the}{\color[HTML]{000000}\ number}{\color[HTML]{000000}.}\\
\ \ \ {\color[HTML]{000000}\ \textgreater{}\textgreater{}\textgreater{}}{\color[HTML]{000000}\ trunc}{\color[HTML]{000000}ate}{\color[HTML]{000000}\_}{\color[HTML]{000000}number}{\color[HTML]{000000}(}{\color[HTML]{000000}3}{\color[HTML]{000000}.}{\color[HTML]{000000}5}{\color[HTML]{000000})}\\
\ \ \ \ {\color[HTML]{000000}0}{\color[HTML]{000000}.}{\color[HTML]{000000}5}\\
\ \ \ {\color[HTML]{000000}\ """}\\
\ \ \ {\color[HTML]{76d4e5}\ return}{\color[HTML]{006a7b}\ number}{\color[HTML]{62cfe0}\ -}{\color[HTML]{76d4e5}\ int}{\color[HTML]{99DDEE}(}{\color[HTML]{008da1}number}{\color[HTML]{00a2c0}\ -}\ {\color[HTML]{99DDEE}0}{\color[HTML]{76d4e5}.}{\color[HTML]{84d7e8}0}{\color[HTML]{005566})}\\
}
\\
\midrule

\makecell[bl]{{\color[HTML]{000000}\ from}{\color[HTML]{000000}\ typing}{\color[HTML]{000000}\ import}{\color[HTML]{000000}\ List}{\color[HTML]{000000},}{\color[HTML]{000000}\ Tu}{\color[HTML]{000000}ple}\\
\\
\\
{\color[HTML]{000000}def}{\color[HTML]{000000}\ sum}{\color[HTML]{000000}\_}{\color[HTML]{000000}product}{\color[HTML]{000000}(}{\color[HTML]{000000}numbers}{\color[HTML]{000000}:}{\color[HTML]{000000}\ List}{\color[HTML]{000000}[}{\color[HTML]{000000}int}{\color[HTML]{000000}])}{\color[HTML]{000000}\ -\textgreater{}}{\color[HTML]{000000}\ Tu}{\color[HTML]{000000}ple}{\color[HTML]{000000}[}{\color[HTML]{000000}int}{\color[HTML]{000000},}{\color[HTML]{000000}\ int}{\color[HTML]{000000}]:}\\
\ \ \ {\color[HTML]{000000}\ """}{\color[HTML]{000000}\ For}{\color[HTML]{000000}\ a}{\color[HTML]{000000}\ given}{\color[HTML]{000000}\ list}{\color[HTML]{000000}\ of}{\color[HTML]{000000}\ integers}{\color[HTML]{000000},}{\color[HTML]{000000}\ return}{\color[HTML]{000000}\ a}{\color[HTML]{000000}\ tuple}{\color[HTML]{000000}\ consisting}{\color[HTML]{000000}\ of}{\color[HTML]{000000}\ a}{\color[HTML]{000000}\ sum}{\color[HTML]{000000}\ and}{\color[HTML]{000000}\ a}{\color[HTML]{000000}\ product}{\color[HTML]{000000}\ of}{\color[HTML]{000000}\ all}{\color[HTML]{000000}\ the}{\color[HTML]{000000}\ integers}{\color[HTML]{000000}\ in}{\color[HTML]{000000}\ a}{\color[HTML]{000000}\ list}{\color[HTML]{000000}.}\\
\ \ \ {\color[HTML]{000000}\ Em}{\color[HTML]{000000}pty}{\color[HTML]{000000}\ sum}{\color[HTML]{000000}\ should}{\color[HTML]{000000}\ be}{\color[HTML]{000000}\ equal}{\color[HTML]{000000}\ to}\ {\color[HTML]{000000}0}{\color[HTML]{000000}\ and}{\color[HTML]{000000}\ empty}{\color[HTML]{000000}\ product}{\color[HTML]{000000}\ should}{\color[HTML]{000000}\ be}{\color[HTML]{000000}\ equal}{\color[HTML]{000000}\ to}\ {\color[HTML]{000000}1}{\color[HTML]{000000}.}\\
\ \ \ {\color[HTML]{000000}\ \textgreater{}\textgreater{}\textgreater{}}{\color[HTML]{000000}\ sum}{\color[HTML]{000000}\_}{\color[HTML]{000000}product}{\color[HTML]{000000}([}{\color[HTML]{000000}])}\\
\ \ \ {\color[HTML]{000000}\ (}{\color[HTML]{000000}0}{\color[HTML]{000000},}\ {\color[HTML]{000000}1}{\color[HTML]{000000})}\\
\ \ \ {\color[HTML]{000000}\ \textgreater{}\textgreater{}\textgreater{}}{\color[HTML]{000000}\ sum}{\color[HTML]{000000}\_}{\color[HTML]{000000}product}{\color[HTML]{000000}([}{\color[HTML]{000000}1}{\color[HTML]{000000},}\ {\color[HTML]{000000}2}{\color[HTML]{000000},}\ {\color[HTML]{000000}3}{\color[HTML]{000000},}\ {\color[HTML]{000000}4}{\color[HTML]{000000}])}\\
\ \ \ {\color[HTML]{000000}\ (}{\color[HTML]{000000}1}{\color[HTML]{000000}0}{\color[HTML]{000000},}\ {\color[HTML]{000000}2}{\color[HTML]{000000}4}{\color[HTML]{000000})}\\
\ \ \ {\color[HTML]{000000}\ """}\\
\ \ \ {\color[HTML]{005566}\ sum}{\color[HTML]{6fd2e3}\ =}\ {\color[HTML]{92dbec}0}\\
\ \ \ {\color[HTML]{06accd}\ product}{\color[HTML]{008a9c}\ =}\ {\color[HTML]{29bad4}1}\\
\\
\ \ \ {\color[HTML]{7dd6e7}\ if}{\color[HTML]{3ac1d7}\ not}{\color[HTML]{68d0e1}\ numbers}{\color[HTML]{29bad4}:}\\
\ \ \ \ \ \ \ {\color[HTML]{68d0e1}\ return}{\color[HTML]{20b7d2}\ sum}{\color[HTML]{84d7e8},}{\color[HTML]{06accd}\ product}\\
\\
\ \ \ {\color[HTML]{68d0e1}\ for}{\color[HTML]{5bcdde}\ index}{\color[HTML]{18b3d0},}{\color[HTML]{31bdd5}\ number}{\color[HTML]{0091a6}\ in}{\color[HTML]{0fb0cf}\ enumerate}{\color[HTML]{4bc8db}(}{\color[HTML]{76d4e5}numbers}{\color[HTML]{20b7d2}):}\\
\ \ \ \ \ \ \ {\color[HTML]{29bad4}\ sum}{\color[HTML]{92dbec}\ +=}{\color[HTML]{6fd2e3}\ index}\\
\ \ \ \ \ \ \ {\color[HTML]{6fd2e3}\ product}{\color[HTML]{76d4e5}\ *}{\color[HTML]{99DDEE}=}{\color[HTML]{7dd6e7}\ number}\\
\ \ \ {\color[HTML]{68d0e1}\ return}{\color[HTML]{009fbb}\ sum}{\color[HTML]{68d0e1},}{\color[HTML]{007182}\ product}\\
\\
\\
{\color[HTML]{62cfe0}def}{\color[HTML]{31bdd5}\ sum}{\color[HTML]{99DDEE}\_}{\color[HTML]{18b3d0}product}{\color[HTML]{7dd6e7}\_}{\color[HTML]{54cbdc}empty}{\color[HTML]{3ac1d7}(}{\color[HTML]{005c6d}numbers}{\color[HTML]{54cbdc}:}{\color[HTML]{76d4e5}\ List}{\color[HTML]{42c4d9}[}{\color[HTML]{31bdd5}int}{\color[HTML]{0fb0cf}])}{\color[HTML]{42c4d9}\ -\textgreater{}}{\color[HTML]{06accd}\ Tu}{\color[HTML]{42c4d9}ple}{\color[HTML]{54cbdc}[}{\color[HTML]{84d7e8}int}{\color[HTML]{68d0e1},}{\color[HTML]{4bc8db}\ int}{\color[HTML]{3ac1d7}]:}\\
\ \ \ {\color[HTML]{42c4d9}\ """}\\
\ \ \ {\color[HTML]{92dbec}\ \textgreater{}\textgreater{}\textgreater{}}{\color[HTML]{7dd6e7}\ sum}{\color[HTML]{92dbec}\_}{\color[HTML]{3ac1d7}product}{\color[HTML]{0fb0cf}([}{\color[HTML]{20b7d2}])}\\
\ \ \ {\color[HTML]{42c4d9}\ (}{\color[HTML]{68d0e1}0}{\color[HTML]{008a9c},}\ {\color[HTML]{92dbec}1}{\color[HTML]{007889})}\\
\ \ \ {\color[HTML]{42c4d9}\ \textgreater{}\textgreater{}\textgreater{}}{\color[HTML]{84d7e8}\ sum}{\color[HTML]{5bcdde}\_}{\color[HTML]{29bad4}product}{\color[HTML]{3ac1d7}([}{\color[HTML]{7dd6e7}1}{\color[HTML]{84d7e8},}\ {\color[HTML]{00a9ca}2}{\color[HTML]{84d7e8},}\ {\color[HTML]{54cbdc}3}{\color[HTML]{62cfe0},}\ {\color[HTML]{42c4d9}4}{\color[HTML]{8bd9ea}])}\\
\ \ \ {\color[HTML]{29bad4}\ (}{\color[HTML]{6fd2e3}1}{\color[HTML]{5bcdde}0}{\color[HTML]{007182},}\ {\color[HTML]{68d0e1}2}{\color[HTML]{42c4d9}4}{\color[HTML]{99DDEE})}\\
\ \ \ {\color[HTML]{62cfe0}\ """}\\
\\
\\
{\color[HTML]{0fb0cf}if}{\color[HTML]{76d4e5}\ \_\_}{\color[HTML]{7dd6e7}name}{\color[HTML]{99DDEE}\_\_}{\color[HTML]{18b3d0}\ ==}{\color[HTML]{84d7e8}\ '}{\color[HTML]{6fd2e3}\_\_}{\color[HTML]{29bad4}main}{\color[HTML]{7dd6e7}\_\_}{\color[HTML]{99DDEE}':}\\
\ \ \ {\color[HTML]{7dd6e7}\ print}{\color[HTML]{99DDEE}(}{\color[HTML]{54cbdc}sum}{\color[HTML]{5bcdde}\_}{\color[HTML]{68d0e1}product}{\color[HTML]{00a5c5}()}\\
\ \ \ {\color[HTML]{00a9ca}\ print}{\color[HTML]{76d4e5}(}{\color[HTML]{8bd9ea}sum}{\color[HTML]{006374}\_}{\color[HTML]{8bd9ea}product}{\color[HTML]{6fd2e3}\_}{\color[HTML]{8bd9ea}empty}{\color[HTML]{00a9ca}([}{\color[HTML]{0094ac}6}{\color[HTML]{3ac1d7}]))}\\
}
\\
\midrule

\makecell[bl]{{\color[HTML]{000000}def}{\color[HTML]{000000}\ string}{\color[HTML]{000000}\_}{\color[HTML]{000000}sequence}{\color[HTML]{000000}(}{\color[HTML]{000000}n}{\color[HTML]{000000}:}{\color[HTML]{000000}\ int}{\color[HTML]{000000})}{\color[HTML]{000000}\ -\textgreater{}}{\color[HTML]{000000}\ str}{\color[HTML]{000000}:}\\
\ \ \ {\color[HTML]{000000}\ """}{\color[HTML]{000000}\ Return}{\color[HTML]{000000}\ a}{\color[HTML]{000000}\ string}{\color[HTML]{000000}\ containing}{\color[HTML]{000000}\ space}{\color[HTML]{000000}-}{\color[HTML]{000000}del}{\color[HTML]{000000}im}{\color[HTML]{000000}ited}{\color[HTML]{000000}\ numbers}{\color[HTML]{000000}\ starting}{\color[HTML]{000000}\ from}\ {\color[HTML]{000000}0}{\color[HTML]{000000}\ u}{\color[HTML]{000000}pto}{\color[HTML]{000000}\ n}{\color[HTML]{000000}\ inclus}{\color[HTML]{000000}ive}{\color[HTML]{000000}.}\\
\ \ \ {\color[HTML]{000000}\ \textgreater{}\textgreater{}\textgreater{}}{\color[HTML]{000000}\ string}{\color[HTML]{000000}\_}{\color[HTML]{000000}sequence}{\color[HTML]{000000}(}{\color[HTML]{000000}0}{\color[HTML]{000000})}\\
\ \ \ {\color[HTML]{000000}\ '}{\color[HTML]{000000}0}{\color[HTML]{000000}'}\\
\ \ \ {\color[HTML]{000000}\ \textgreater{}\textgreater{}\textgreater{}}{\color[HTML]{000000}\ string}{\color[HTML]{000000}\_}{\color[HTML]{000000}sequence}{\color[HTML]{000000}(}{\color[HTML]{000000}5}{\color[HTML]{000000})}\\
\ \ \ {\color[HTML]{000000}\ '}{\color[HTML]{000000}0}\ {\color[HTML]{000000}1}\ {\color[HTML]{000000}2}\ {\color[HTML]{000000}3}\ {\color[HTML]{000000}4}\ {\color[HTML]{000000}5}{\color[HTML]{000000}'}\\
\ \ \ {\color[HTML]{000000}\ """}\\
\ \ \ {\color[HTML]{7dd6e7}\ numbers}{\color[HTML]{06accd}\ =}{\color[HTML]{42c4d9}\ []}\\
\ \ \ {\color[HTML]{62cfe0}\ \#}{\color[HTML]{54cbdc}\ add}{\color[HTML]{68d0e1}\ numbers}{\color[HTML]{42c4d9}\ to}{\color[HTML]{7dd6e7}\ sequence}\\
\ \ \ {\color[HTML]{31bdd5}\ numbers}{\color[HTML]{84d7e8}\ +=}{\color[HTML]{68d0e1}\ [}{\color[HTML]{76d4e5}str}{\color[HTML]{99DDEE}(}{\color[HTML]{42c4d9}value}{\color[HTML]{84d7e8})}{\color[HTML]{29bad4}\ for}{\color[HTML]{003a48}\ value}{\color[HTML]{20b7d2}\ in}{\color[HTML]{18b3d0}\ range}{\color[HTML]{29bad4}(}{\color[HTML]{0098b1}n}{\color[HTML]{76d4e5}+}{\color[HTML]{54cbdc}1}{\color[HTML]{00a5c5})]}\\
\ \ \ {\color[HTML]{84d7e8}\ \#}{\color[HTML]{009fbb}\ space}{\color[HTML]{00a9ca}\ del}{\color[HTML]{68d0e1}im}{\color[HTML]{92dbec}ited}\\
\ \ \ {\color[HTML]{4bc8db}\ return}{\color[HTML]{76d4e5}\ str}{\color[HTML]{0fb0cf}('}{\color[HTML]{7dd6e7}\ '.}{\color[HTML]{00a2c0}join}{\color[HTML]{62cfe0}(}{\color[HTML]{99DDEE}numbers}{\color[HTML]{5bcdde}))}\\}
\\
\bottomrule
\end{tabular}
}
\caption{Edit Flow generation examples with $X_0=\emptyset$ (i.e. insert-only model) without a divergence-free component. 100 sampling steps. The function signature and the docstring serve as prompts.}
\label{fig:color_coded_appendix}
\end{figure}

\begin{figure}[t]
  \centering
  \begin{minipage}{0.9\linewidth}
    \captionsetup{type=figure}
    \textbf{Initial State (Step 0):}
    \begin{lstlisting}[language=Python]
def is_prime(n: int) -> bool:
    if n <= 1:
        return True
    for i in range(2, n):
        if i % n == 0:
            return True
    return False
    \end{lstlisting}
    \vspace{1em}
    \textbf{Intermediate State (Step 150):}
    \begin{lstlisting}[language=Python]
def is_prime(n: int) -> bool:
    if n <= 1:
        return False
    for i in range(2, n):
        if. % n == 0:
            return False
 True
,
6
    \end{lstlisting}
    \vspace{1em}
    \textbf{Final State (Step 300):}
    \begin{lstlisting}[language=Python]
def is_prime(n: int) -> bool:
    if n <= 1:
        return False
    for i in range(2, n):
        if n % i == 0:
            return False
    return True
    \end{lstlisting}
  \end{minipage}
  \caption{Edit Flow error correction example for correcting the \texttt{is\_prime} function. The initial implementation is incorrect, because the three return statement are negated. The model starts from an incorrect implementation, makes 117 edits over 300 steps, and reaches the correct final state. Note that intermediate states may contain extra tokens that are later deleted.}
  \label{fig:editflow_isprime_example}
\end{figure}

\section{Theorems and proofs}\label{app:derivations}

\dfmaux*

\begin{proof}
For the first part of the theorem \eqref{eq:aux_ut_generates_pt}, since $u_t(x, z | x_t, z_t)$ generates $p_t(x, z)$, they satisfy the Kolmogorov forward equation
\begin{equation*}
\frac{\partial}{\partial t} p_t(x, z) = \sum_{x_t} \sum_{z_t} p_t(x_t, z_t) u_t(x, z | x_t, z_t),
\end{equation*}
then we can show $u_t(x | x_t)$ and $p_t(x)$ also satisfy the Kolmogorov forward equation
\begin{align*}
    \frac{\partial}{\partial t} p_t(x) &= \sum_z \frac{\partial}{\partial t} p_t(x, z) = \sum_z \sum_{x_t} \sum_{z_t} p_t(x_t, z_t) u_t(x, z | x_t, z_t) \\
    &= \sum_{x_t} \underbrace{ \sum_z \sum_{z_t} u_t(x, z | x_t, z_t) \frac{p_t(x_t, z_t)}{p_t(x_t)} }_{u_t(x | x_t)} p_t(x_t) \\
    &= \sum_{x_t} p_t(x_t) u_t(x | x_t).
\end{align*}
Additionally, $u_t(x, z | x_t, z_t)$ satisfies the rate conditions by assumption.  Assume $p_t(x_t) > 0$.  Then $\sum_x u_t(x | x_t) = \sum_{z_t} (\sum_x \sum_z u_t(x, z | x_t, z_t)1(p_t(x_t, z_t) > 0)) \frac{p_t(x_t, z_t)}{p_t(x_t)} = 0$.  Further, $u_t(x | x_t) \geq 0$ when $x \neq x_t$ and $p_t(x_t) > 0$ because $u_t(x, z | x_t, z_t) \geq 0$ when $(x, z) \neq (x_t, z_t)$ and $p_t(x_t, z_t) > 0$.  Terms with $p_t(x_t, z_t)=0$ do not contribute in the sum.  So $u_t(x | x_t)$ satisfies the rate conditions.

For the second part of the theorem \eqref{eq:aux_bregman_divergence_loss}, note that
\begin{align*}
&\E_{x_t, z_t \sim p_t(x, z)} \sum_x \sum_z u_t(x, z | x_t, z_t) \frac{\mathrm{d}}{\mathrm{d} u} \phi (u_t^\theta(x | x_t)) \\
&= \sum_{x_t} \sum_{z_t} \sum_x \sum_z \frac{p_t(x_t, z_t)}{p_t(x_t)} p_t(x_t) u_t(x, z | x_t, z_t) \frac{\mathrm{d}}{\mathrm{d} u} \phi (u_t^\theta(x | x_t)) \\
&= \sum_{x_t} \sum_x p_t(x_t) u_t(x | x_t)\frac{\mathrm{d}}{\mathrm{d} u} \phi (u_t^\theta(x | x_t)) \\
&= \E_{x_t \sim p_t(x)} \sum_x u_t(x | x_t)\frac{\mathrm{d}}{\mathrm{d} u} \phi (u_t^\theta(x | x_t))
\end{align*}
then we can directly prove the result
\begin{align*}
    &\frac{\mathrm{d}}{\mathrm{d} \theta} 
    \E_{x_t, z_t \sim p_t(x, z)} D_\phi \left( \sum_z u_t(\cdot, z | x_t, z_t), u_t^\theta(\cdot | x_t) \right) \\
    &= \frac{\mathrm{d}}{\mathrm{d} \theta} \E_{x_t, z_t \sim p_t(x, z)}  \left[ \phi(\tsum_z u_t(\cdot, z | x_t, z_t)) - \phi(u_t^\theta(\cdot | x_t)) - \langle \tsum_z u_t(\cdot, z | x_t, z_t) - u_t^\theta(\cdot | x_t), \frac{\mathrm{d}}{\mathrm{d} u} \phi (u_t^\theta(\cdot | x_t)) \right] \\
    &= \frac{\mathrm{d}}{\mathrm{d} \theta} \E_{x_t, z_t \sim p_t(x, z)}  \left[ - \phi(u_t^\theta(\cdot | x_t)) - \tsum_x \tsum_z u_t(x, z | x_t, z_t) \frac{\mathrm{d}}{\mathrm{d} u} \phi (u_t^\theta(x | x_t)) + \tsum_x u_t^\theta(x | x_t) \frac{\mathrm{d}}{\mathrm{d} u} \phi (u_t^\theta(x | x_t)) \right] \\
    &= \frac{\mathrm{d}}{\mathrm{d} \theta} \E_{x_t \sim p_t(x)}  \left[ \phi(u_t(\cdot | x_t)) - \phi(u_t^\theta(\cdot | x_t)) - \tsum_x u_t(x | x_t) \frac{\mathrm{d}}{\mathrm{d} u} \phi (u_t^\theta(x | x_t)) + \tsum_x u_t^\theta(x | x_t) \frac{\mathrm{d}}{\mathrm{d} u} \phi (u_t^\theta(x | x_t)) \right] \\
    &= \frac{\mathrm{d}}{\mathrm{d} \theta} \E_{x_t \sim p_t(x)}  \left[ \phi(u_t(\cdot | x_t)) - \phi(u_t^\theta(\cdot | x_t)) - \langle u_t(x | x_t) - u_t^\theta(x | x_t), \frac{\mathrm{d}}{\mathrm{d} u} \phi (u_t^\theta(x | x_t)) \rangle \right] \\
    &= \frac{\mathrm{d}}{\mathrm{d} \theta} \E_{x_t \sim p_t(x)} D_\phi (u_t(\cdot | x_t), u_t^\theta(\cdot | x_t))
\end{align*}
\end{proof}

To apply theorem~\ref{thm:dfm_aux}, we require a rate $u_t(x, z | x_t, z_t)$ in the augmented space of $\gX \times \gZ$ that generates $p_t(x, z)$.  The following lemma can simplify this construction.

\begin{restatable}[Rates that generate $p_t(x, z) = p(x | z)p_t(z)$]{lem}{rates_aux} \label{lem:rates}
Let $p_t(x, z)$ be a distribution over augmented space of $\gX \times \gZ$ where $p_t(x | z) = p(x | z)$ is time-independent. Let $u_t(z | z_t)$ be a rate over $\gZ$ that generates $p_t(z)$.  Then 
\begin{equation}
\label{eq:time_independent_rates}
u_t(x, z | x_t, z_t) = (1-\delta_{z_t}(z))p(x|z)u_t(z | z_t) + \delta_{x_t}(x)\delta_{z_t}(z)u_t(z,z_t) 
\end{equation}
is a rate over augmented space of $\gX \times \gZ$ that generates $p_t(x, z)$.
\end{restatable}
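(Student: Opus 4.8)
The plan is to verify directly the two properties that together constitute the claim that $u_t(x,z|x_t,z_t)$ generates $p_t(x,z)$: first that it satisfies the rate conditions \eqref{eq:rate_conditions_general} over the augmented space $\gX \times \gZ$, and second that it solves the Kolmogorov forward equation \eqref{eq:kfe} for the path $p_t(x,z) = p(x|z)\,p_t(z)$. Throughout I would use the two hypotheses: that $u_t(z|z_t)$ satisfies the rate conditions and the KFE over $\gZ$ (i.e.\ generates $p_t(z)$), and that $p(\cdot\,|\,z)$ is a normalized conditional, so $\tsum_x p(x|z) = 1$. The guiding structural observation is that the proposed rate only moves $x$ when $z$ moves: for $z \neq z_t$ it resamples $x \sim p(\cdot\,|\,z)$ at total rate $u_t(z|z_t)$, while the $\delta_{x_t}(x)\delta_{z_t}(z)$ piece merely carries the $\gZ$-diagonal self-rate $u_t(z_t|z_t)$ and keeps both coordinates fixed.

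For the rate conditions I would check non-negativity by cases. When $z \neq z_t$ the rate equals $p(x|z)\,u_t(z|z_t) \geq 0$, since both factors are nonnegative (the second by the off-diagonal condition on $u_t$ over $\gZ$). When $z = z_t$ but $x \neq x_t$ both terms vanish, giving rate $0 \geq 0$; thus transitions that alter $x$ while fixing $z$ are forbidden, as intended. For the zero-row-sum condition I would sum over $(x,z)$: the first term contributes $\tsum_{z \neq z_t} u_t(z|z_t)\tsum_x p(x|z) = \tsum_{z \neq z_t} u_t(z|z_t)$ after using normalization of $p(\cdot\,|\,z)$, and the second term collapses under the two Kronecker deltas to $u_t(z_t|z_t)$; adding these recovers $\tsum_z u_t(z|z_t) = 0$.

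For the forward equation, the key computation is to evaluate $\tsum_{x_t, z_t} u_t(x,z|x_t,z_t)\,p_t(x_t,z_t)$ with $p_t(x_t,z_t) = p(x_t|z_t)\,p_t(z_t)$ and show it equals $\frac{\partial}{\partial t} p_t(x,z) = p(x|z)\,\frac{\partial}{\partial t}p_t(z)$. I would substitute the two-term rate, perform the inner sum over $x_t$ (which eliminates $p(x_t|z_t)$ by normalization in the first term, and is collapsed by $\delta_{x_t}(x)$ in the second), and factor out $p(x|z)$. The first term then yields $p(x|z)\tsum_{z_t \neq z} u_t(z|z_t)\,p_t(z_t)$ and the second yields $p(x|z)\,u_t(z|z)\,p_t(z)$; their sum is $p(x|z)\tsum_{z_t} u_t(z|z_t)\,p_t(z_t)$, which by the $\gZ$-KFE equals $p(x|z)\,\frac{\partial}{\partial t}p_t(z)$, as required.

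The calculations are all routine; the only point requiring care is the bookkeeping of the diagonal ($z = z_t$) contribution. One must recognize that reconstructing the full inner sum $\tsum_{z_t}$ over $\gZ$ requires precisely the self-transition term $\delta_{x_t}(x)\delta_{z_t}(z)\,u_t(z_t|z_t)$ to supply the $z_t = z$ summand that the first ($z \neq z_t$) term omits — and, symmetrically, that this same diagonal term is what makes the augmented rows sum to zero. Keeping the off-diagonal and diagonal pieces consistently aligned between the rate-condition check and the KFE check is the main thing to get right.
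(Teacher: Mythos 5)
Your proposal is correct and takes essentially the same route as the paper's own proof: both verify the rate conditions (non-negativity by cases, zero row sums via $\tsum_x p(x|z)=1$), then check the Kolmogorov forward equation by summing out $x_t$, reassembling the missing $z_t=z$ diagonal summand from the $\delta_{x_t}(x)\delta_{z_t}(z)u_t(z|z_t)$ term, and invoking the $\gZ$-level forward equation. Your bookkeeping of the diagonal contribution matches the paper's cancellation step exactly, so there is nothing to add.
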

\begin{proof}
We first check rate conditions (\ref{eq:rate_conditions_general}) for $u_t(x, z |x_t, z_t)$.  When $(x,z) \neq (x_t, z_t)$ and $p_t(x_t, z_t) > 0$, $u_t(x, z | x_t, z_t) = (1-\delta_{z_t}(z))p(x|z)u_t(z | z_t) \geq 0$ because $p_t(z_t) > 0$. Then 
\begin{align}
\sum_{x, z} u_t(x, z | x_t, z_t) &= \sum_{x, z} (1-\delta_{z_t}(z))p(x|z)u_t(z | z_t) + \delta_{x_t}(x)\delta_{z_t}(z)u_t(z,z_t) \nonumber\\
&= \sum_z u_t(z | z_t) - u_t(z_t | z_t) + u_t(z_t | z_t) \nonumber\\
&= \sum_z u_t(z | z_t) \nonumber\\
&= 0. \nonumber
\end{align}
where the last equality uses that $u_t(z | z_t)$ is a rate over $\gZ$ and again $p_t(x_t, z_t) > 0 \implies p_t(z_t) > 0$.

Now we show $u_t(x, z | x, z_t)$ also satisfies the Kolmogorov forward equation (\ref{eq:kfe}) for $p_t(x, z)$ which proves the result
\begin{align}
&\sum_{x_t, z_t} u_t(x, z | x_t, z_t) p_t(x_t, z_t) \nonumber\\
&= \sum_{x_t, z_t} \bigg((1-\delta_{z_t}(z))p(x|z)u_t(z | z_t) + \delta_{x_t}(x)\delta_{z_t}(z)u_t(z,z_t)\bigg) p_t(x_t, z_t) \nonumber\\
&= p(x|z)\sum_{z_t} u_t(z | z_t)p_t(z_t) - u_t(z | z)p_t(x, z) + u_t(z | z)p_t(x,z) \nonumber\\
&= p(x|z)\frac{\partial}{\partial t} p_t(z) \nonumber\\
&= \frac{\partial}{\partial t} p_t(x, z). \nonumber
\end{align}
\end{proof}

When the relationship between $x$ given auxiliary $z$ is not only time-independent, but also deterministic, this Lemma~\ref{lem:rates} leads to the following Lemma stated inline in the main text

\begin{restatable}[Rates that generate $p_t(x, z) = \delta_{f(z)}(x)p_t(z)$]{lem}{ratescorr} \label{lem:determinstic_rates}
Let $p_t(x, z) = \delta_{f(z)}(x)p_t(z)$ be a distribution over augmented space of $\gX \times \gZ$ where $p_t(x | z)=\delta_{f(z)}(x)$ is time-independent and deterministic. Let $u_t(z | z_t)$ be a rate over $\gZ$ that generates $p_t(z)$.  Then 
\begin{equation}
u_t(x, z | x_t, z_t) = \delta_{f(z)}(x) u_t(z | z_t)
\end{equation}
is a rate over augmented space of $\gX \times \gZ$ that generates $p_t(x, z)$.
\end{restatable}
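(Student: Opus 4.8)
The plan is to obtain this statement as the deterministic specialization of \Cref{lem:rates}. Concretely, I would set $p(x | z) = \delta_{f(z)}(x)$, which is both time-independent and a valid conditional mass function (since $\sum_x \delta_{f(z)}(x) = 1$), so the hypotheses of \Cref{lem:rates} are satisfied. It then remains only to verify that the two-term rate produced by \Cref{lem:rates} collapses to the single term $\delta_{f(z)}(x)\, u_t(z | z_t)$ wherever it is required to.

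Substituting $p(x | z)=\delta_{f(z)}(x)$ into \eqref{eq:time_independent_rates} yields
\[
(1-\delta_{z_t}(z))\,\delta_{f(z)}(x)\,u_t(z | z_t) + \delta_{x_t}(x)\,\delta_{z_t}(z)\,u_t(z_t | z_t).
\]
For $z \neq z_t$ the second term drops and this already equals $\delta_{f(z)}(x)\,u_t(z | z_t)$. The only remaining case is the diagonal $z = z_t$, where the first term vanishes and I must reconcile $\delta_{x_t}(x)\,u_t(z_t | z_t)$ with $\delta_{f(z_t)}(x)\,u_t(z_t | z_t)$. The key observation is that $p_t(x_t, z_t) = \delta_{f(z_t)}(x_t)\,p_t(z_t)$, so on the support $p_t(x_t, z_t) > 0$ one necessarily has $x_t = f(z_t)$, hence $\delta_{x_t}(x) = \delta_{f(z_t)}(x)$ and the two expressions agree. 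Because the rate conditions \eqref{eq:rate_conditions_general} and the Kolmogorov forward equation \eqref{eq:kfe} are only imposed where $p_t(x_t, z_t) > 0$, the proposed rate coincides with the one certified by \Cref{lem:rates} everywhere it matters, and the result follows.

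As a cross-check I would also verify the claim directly, which here is short. Non-negativity off the diagonal is inherited from $u_t(z | z_t) \geq 0$; the sum condition follows from $\sum_{x, z} \delta_{f(z)}(x)\,u_t(z | z_t) = \sum_z u_t(z | z_t) = 0$ using $\sum_x \delta_{f(z)}(x) = 1$; and for the forward equation, summing the proposed rate against $p_t(x_t, z_t)$ and using $\sum_{x_t} p_t(x_t, z_t) = p_t(z_t)$ reduces the right-hand side to $\delta_{f(z)}(x)\sum_{z_t} u_t(z | z_t)\,p_t(z_t) = \delta_{f(z)}(x)\,\tfrac{\partial}{\partial t} p_t(z)$, which equals $\tfrac{\partial}{\partial t}\big(\delta_{f(z)}(x)\,p_t(z)\big) = \tfrac{\partial}{\partial t} p_t(x, z)$ since $\delta_{f(z)}(x)$ is time-independent. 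The single point that deserves care—and the only thing resembling an obstacle—is the support subtlety above: the specialized two-term formula and the claimed one-term formula are not identical as functions on all of $(x_t, z_t)$ and may disagree off the support along $z = z_t$, but they coincide precisely where $x_t = f(z_t)$, which is all of the support, and that is all that generation requires.
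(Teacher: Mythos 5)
Your proposal is correct and follows essentially the same route as the paper: both specialize Lemma~\ref{lem:rates} with $p(x|z)=\delta_{f(z)}(x)$ and use the observation that the rate is only needed on the support $p_t(x_t,z_t)>0$, where $x_t=f(z_t)$, to collapse the two-term rate \eqref{eq:time_independent_rates} to $\delta_{f(z)}(x)\,u_t(z|z_t)$. Your additional direct verification of the rate conditions and the Kolmogorov forward equation is a sound, self-contained cross-check that the paper omits, but it does not change the substance of the argument.
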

\begin{proof}
From Lemma~\ref{lem:rates}, the rate in equation \eqref{eq:time_independent_rates} generates this $p_t(x,z)$ using $u_t(z | z_t)$.  Because we only use this rate when $p_t(x_t, z_t) > 0$, this rate will always be evaluated at $x_t = f(z_t)$ giving
\begin{align}
u_t(x, z, f(z_t), z_t) &= (1-\delta_{z_t}(z))p(x|z)u_t(z | z_t) + \delta_{f(z_t)}(x)\delta_{z_t}(z)u_t(z,z_t)  \nonumber\\
&= \delta_{f(z)}(x)u_t(z | z_t) + \delta_{z_t}(z)\left(-\delta_{f(z)}(x) + \delta_{f(z_t)}(x)\right) u_t(z | z_t) \nonumber\\
&= \delta_{f(z)}(x)u_t(z | z_t). \nonumber
\end{align}  
\end{proof}

\subsection{A Bregman divergence as the training loss for Edit Flows}
\label{app:training_loss}

Given velocities $u_t(\cdot, z | x_t, z_t)$ and $u_t^\theta(\cdot | x_t)$ that satisfy the rate conditions, we define 
\begin{equation}
    \phi(u_t(\cdot | x_t)) = \sum_{x \neq x_t} u_t(x | x_t) \log u_t(x | x_t).
\end{equation} 
The Bregman divergence corresponding to this $\phi$ is:
\begin{align}
    D_{\phi}(f(\cdot|x_t), g(\cdot|x_t)) &= \phi(f(\cdot |x_t)) - \phi(g(\cdot | x_t)) - \sum_{x \neq x_t} (f(x | x_t) - g(x | x_t))(1 + \ln g(x | x_t)) \nonumber\\
    &= \sum_{x \neq x_t} \left(-f(x|x_t) - f(x|x_t) \ln \frac{g(x | x_t)}{f(x | x_t)} + g(x|x_t)\right)
\end{align}
Therefore the training loss for Edit Flows with this $\phi$ can be written
\begin{align}
    \mathcal{L}(\theta) &= \E_{t, \pi(z_0, z_1), p_t(x_t, z_t | z_0, z_1)} D_\phi \left( \sum_{z} u_t(\cdot, z | x_t, z_t), u_t^\theta(\cdot | x_t) \right) \nonumber\\
    &= \E_{t, \pi(z_0, z_1), p_t(x_t, z_t | z_0, z_1)} \Bigg[ - \sum_{z, x \neq x_t} u_t(x, z | x_t, z_t, z_0, z_1) + \sum_{x \neq x_t} u_t^\theta(x | x_t) \nonumber\\
    &\qquad\qquad\qquad - \sum_{z, x \neq x_t} u_t(x, z | x_t, z_t, z_0, z_1) \log \frac{u_t^\theta(x | x_t)}{u_t(x, z | x_t, z_t, z_0, z_1)}  \Bigg] \nonumber\\
    &= - \E_{t, \pi(z_0, z_1), p_t(x_t, z_t | z_0, z_1)} \Bigg[ u_t^\theta(x_t | x_t) + \sum_z \sum_{x \neq x_t} u_t(x, z | x_t, z_t, z_0, z_1)  \log u_t^\theta(x | x_t) \Bigg] + \text{const.}
\end{align}

\section{Advanced techniques for Edit Flows} \label{app:advanced_edit_flows}

\paragraph{Sampling.} 
Sampling from the model requires transporting a source sample $X_0 \sim p$ to time $t=1$, simulating the CTMC defined with the learned rate $u_t^{\theta}$. Exact simulation \citep{gillespie1976general,gillespie1977exact} is intractable as it requires integration of $u_t^\theta$. 
With the Edit Flow parameterization \eqref{eq:edit_flow_parameterization}-\eqref{eq:edit_flow_parameterization_2}, the exact probability of an edit operation characterized by the rate $\lambda_{t,i}$ occurring within an interval $(t, t+h)$ is
\begin{equation}
    1-e^{- \int_{t}^{t+h} \lambda_{t,i}(X_t) \mathrm{d}t} \approx h\lambda_{t,i}(X_t).
\end{equation}
Following previous works \citep{campbell2022continuous,gat2024discrete}, we leverage the first-order approximation.
Sampling thus iterates the following procedure: with current state $X_t$ and step size $h$, independently determine the probability of each insertion, deletion and substitution, then perform all edit operations simultaneously.
\begin{enumerate}
    \item For each position $i$, sample whether to insert with probability 
    $h\lambda_{t,i}^{\ins}(X_t)$ and whether to delete or substitute with probability 
    $h(\lambda_{t,i}^{\ins}(X_t) + \lambda_{t,i}^{\del}(X_t))$.  Since deletions and substitutions at the same position are exclusive, if either occurs, select deletion with probability $\lambda_{t,i}^{\del}(X_t) / (\lambda_{t,i}^{\del}(X_t) + \lambda_{t,i}^{\sub}(X_t))$, otherwise substitution.
    \item If insertion or substitution at $i$, sample the new token value from $Q_{t,i}^{\ins / \sub}(\cdot | X_t)$.
    \item $t \leftarrow t + h$
\end{enumerate}

\paragraph{Classifier-free guidance.} We considered three approaches to add classifier-free guidance to Edit Flows. Classifier-free guidance (CFG) considers training a model with and without conditioning $c$ and combining those two models at sampling time using a weighting hyperparameter $w$. 

Our first approach is \emph{weighted rate} CFG which follows \citealt{nisonoff2024unlocking} and uses (for $x \neq x_t$ and within one edit operation)
\begin{align}
    &\tilde{u}_t(x | x_t, c) \triangleq u_t(x | x_t)^{1 - w} u_t(x | x_t, c)^w = \hat{\lambda}_{t,i}(x_t, c) \tilde{Q}_{t,i}(a | x_t, c) \\
    & \text{with } \;
    \begin{array}{l}
    \hat{\lambda}_{t,i}(x_t, c) = \lambda_{t,i}(x_t)^{1 - w} \lambda_{t,i}(x_t | c)^w \sum_a Q_{t,i}(a | x_t)^{1 - w} Q_{t,i}(a | x_t, c)^{w} \\
    \tilde{Q}_{t,i}(a | x_t, c) \propto Q_{t,i}(a | x_t)^{1 - w} Q_{t,i}(a | x_t, c)^{w}
    \end{array}
\end{align}
where $\lambda_{t,i}$ and $Q_{t,i}$ are for the specific edit operation taking $x_t \rightarrow x$. 

Our second \emph{fixed rate} CFG which uses $\tilde{u}_t(x | x_t, c) \triangleq \lambda_t(x_t, c) \tilde{Q}_t(a | x_t, c)$. 

Our third approach is \emph{na\"ive rate} CFG which uses $\tilde{u}_t(x | x_t, c) \triangleq \tilde{\lambda}_{t,i}(x_t, c) \tilde{Q}_t(a | x_t, c)$ where $\tilde{\lambda}_t(x_t, c) = \lambda_{t,i}(x_t | c)^{1+w} \lambda_{t,i}(x_t)^{-w}$.

Note that these CFG methods only differ in how the modified $\lambda_{t,i}$ is constructed, impacting the probability of an edit operation. For all of our benchmarks, the $\emph{na\"ive rate}$ CFG consistently performed the best, with \emph{fixed rate} CFG very close in performance; however, the \emph{weighted rate} CFG was consistently worse than either options.
When CFG is applied in conjunction with reverse rates, we applied CFG to both the forward and reverse rates.

\paragraph{Reverse rates.} 
A CTMC Markov process can also be defined via reverse time simulation from $t=1$ to $t=0$ using rates $\cev{u}_t$
\begin{align}
\label{eq:reverse_ctmc_simulation}
\P(X_{t-h} = x | X_t = x_t) = \delta_{x_t}(x) + h\cev{u}_t(x | x_t) + o(h)
\end{align}
where $o(h)$ satisfies $\lim_{h\rightarrow 0}\tfrac{o(h)}{h} = 0$.  This equation is identical to forward-time simulation $(\ref{eq:forward_ctmc_simulation})$ except that the transition is from $t$ to $t-h$ instead of $t$ to $t+h$.  Like $(\ref{eq:forward_ctmc_simulation})$, in order for $(\ref{eq:reverse_ctmc_simulation})$ to define a valid probability distribution, reverse rates $\cev{u}_t$ must obey the rate conditions in $(\ref{eq:rate_conditions_general})$.

A rate $\cev{u}_t$ "generates" a probability path $p_t$ if the time marginals of the associated reverse-time simulation are samples from $p_t$, \ie, $X_t \sim p_t$. Concretely, they should satisfy the Kolmogorov forward equation in reverse (\ie, with a minus sign)
\begin{align}
\label{eq:reverse_kfe}
-\frac{\partial}{\partial t} p_t(x) = \sum_{y} \cev{u}_t(x | y) p_t(y) = \underbrace{\sum_{y \neq x} \cev{u}_t(x|y) p_t(y)}_{\text{reverse flow into $x$}} - \underbrace{\sum_{y \neq x} \cev{u}_t(y | x) p_t(x)}_{\text{reverse flow out of $x$}}.
\end{align}

We can construct $\cev{u}_t$ that generates $p_t$ (and in fact is a CTMC with the same joint distribution) from $u_t$ that generates $p_t$ via the following procedure.  Assume $u_t$ generates $p_t$.  For $x \neq x'$, consider that the probability flux from $x$ in forward time towards $x'$ equals the probability flux from $x'$ to $x$ in reverse time as follows
\begin{align}
\underbrace{\cev{u}_t(x | x') p_t(x')}_{\text{reverse flux from $x'$ into $x$}} = \underbrace{u_t(x' | x) p_t(x)}_{\text{flux from $x$ into $x'$}}.
\end{align}
Inserting into the Kolmogorov forward equation satisfied by $u_t$
\begin{align}
\frac{\partial}{\partial t} p_t(x) &= \sum_{y \neq x} u_t(x|y) p_t(y) - \sum_{y \neq x} u_t(y | x) p_t(x) \nonumber\\
&= \sum_{y \neq x} \cev{u}_t(y|x) p_t(x) - \sum_{y \neq x} \cev{u}_t(x | y) p_t(y) \nonumber\\
&= - \sum_y \cev{u}_t(x | y)p_t(y),
\end{align}
so $\cev{u}_t$ generates $p_t$.

Now consider $u_t + \cev{u}_t$, which satisfies $(\ref{eq:rate_conditions_general})$ and is \textit{probability-preserving} such that $\sum_{x_t} (u_t(x | x_t) + \cev{u}_t(x|x_t))p_t(x_t) = 0$.  If we perform forward simulation with this rate using $(\ref{eq:forward_ctmc_simulation})$ starting from $x \sim p_t(x)$ and sampling $x'$, we maintain that $x' \sim p_t(x)$.  This allows \textit{corrector} steps that can correct errors in the marginal distribution via repeatedly applying such a step without updating time.

We also have that $(1+\alpha) u_t + \alpha\cev{u}_t$ for $\alpha \geq 0$ generates $p_t$ in forward time.  This combination rate can be simulated via stepping forward from $x_t$ to $x_{t+h(1+\alpha)}$ using $u_t$ and then backwards to $x_{t+h}$ using $\cev{u}_{t+h(1+\alpha)}$.  To see this is equivalent for small $h$, let $y = x_{t+h(1+\alpha)}$ and consider the distribution of $x_{t+h}$ after the combination of these two steps
\begin{align}
\sum_{y}\left(\delta_{y}(x_{t+h}) + h\alpha\cev{u}_{t+h(1+\alpha)}(x_{t+h} | y) + o(h)\right)\left(\delta_{y}(x) + h(1+\alpha)u_t(y | x_t) + o(h)\right) \nonumber\\
= \delta_{x_t}(x_{t+h}) + h\left(\alpha \cev{u}_t(x_{t+h}|x_t) + (1+\alpha)u_t(x_{t+h} | x_t)\right) + o(h).
\end{align}

\subsection{Localized propagation paths} \label{app:propagation}

\begin{figure}
    \centering
    \includegraphics[width=0.5\linewidth]{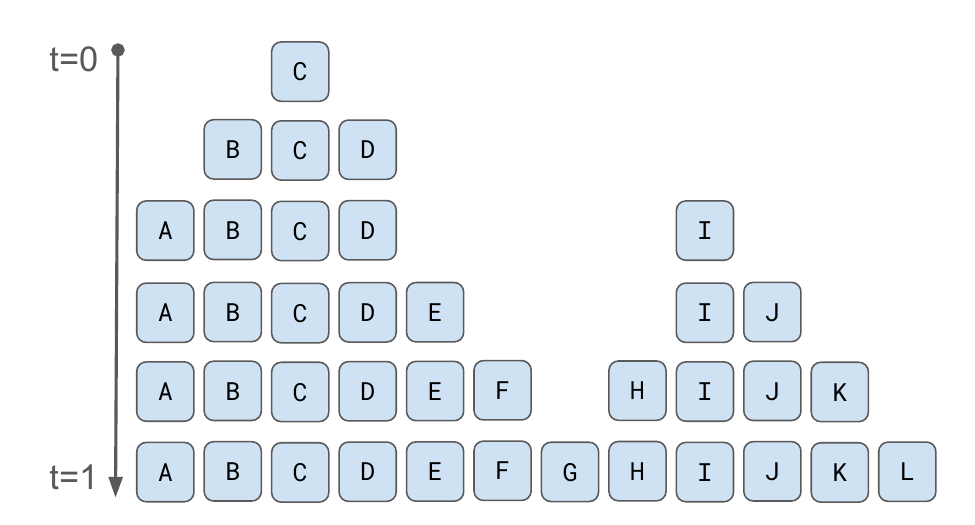}
    \caption{Illustration of the localized generation path. Tokens that are neighboring to existing tokens have a much higher likelihood of getting inserted next. Tokens that are not next to already inserted tokens (e.g. token \textbf{I}) have a small, but non-zero likelihood of getting inserted in the current timestep.}
    \label{fig:placeholder}
\end{figure}

Edit Flows leverage an underlying conditional probability path $p_t(z | z_0, z_1)$ and associated rates $u_t(z | z_t)$, so far given by the factorized token-wise mixture.  Let us further generalize this probability path and associated rate to be non-factorized, applying auxiliary variables again.  We first re-express this probability path through an auxiliary boolean variable $\m \in \{\verb|false|, \verb|true|\}^N$:
\begin{align}\label{eq:conditional_prob_path_using_m}
    &p_t(z | z_0, z_1) = \sum_{\m} p_t(\m | z_0, z_1) p_t(z | \m, z_0, z_1), \\ &\text{ where } p_t(z | \m, z_0, z_1) = \prod_{i=1}^N \1_{[\neg \m^i]} \delta_{z_0^i}(z^i) + \1_{[\m^i]} \delta_{z_1^i}(z^i),
\end{align}
where $\1_{[\cdot]}$ is the indicator function and returns one if the input is \verb|true|, zero otherwise.
That is, $\m^i$ indicates whether $z^i$ is equal to $z_0^i$ or $z_1^i$. In the case of $p_t(\m | z_0, z_1)$ being be a factorized distribution, this would recover the factorized probability path \eqref{eq:x_factorized_mixture_path}.
\begin{equation}
    p_t(\m | z_0, z_1) = \prod_{i=1}^N p_t(\m^i | z_0, z_1), \qquad p_t(\m^i | z_0, z_1) = \1_{[\neg \m^i]} (1 - \kappa_t) + \1_{[\m^i]} \kappa_t
\end{equation}
This helps ensure that the conditional rates can be constructed easily.
However, this could be problematic for Edit Flows as when the sequence length becomes large, noisy sequences $x_t$ will consist of non-neighboring tokens. Instead, we will propose a non-factorized locality-based construction in which if $\m^j$ is \verb|true|, it incites nearby values ($\m^{j-1}$ and $\m^{j+1}$) to transition their value to \verb|true|, thereby encouraging nearby neighbors to have similar values. 

Let us consider an extended space of boolean variables denoted by $\mathsf{M} \in \{ \verb|true| , \verb|false| \}^{N \times N}$ and consider $N$ independent CTMC processes, starting at all values being \verb|false|. 
For each row $\M^i$, we create a process where $\M^{i,i}$ first switches to \verb|true| according to a time-dependent rate $\lambda_t^\text{indep}$ and this then propagates to neighboring values according to a propagation rate $\lambda^{\text{prop}}$. 
This can be concisely expressed as the following CTMC process for each $\M^i$.
\begin{equation}\label{eq:localized_mask_ctmc}
    u_t(\M^{i,j} | \M_t^{i}) = \left( \lambda_t^\text{indep} \delta_{ij} + \1_{[\M^{i,j-1}_t \lor \M^{i,j+1}_t]} \lambda^\text{prop} \right) (\1_{[\M^{i,j}]} - \delta_{\M_t^{i,j}}(\M^{i,j})), \qquad \M_0^i = \verb|false|,
\end{equation}
where $\delta_{ij} = 1$ if $i = j$ and $\delta_{ij} = 0$ otherwise. Breaking this down, $\lambda_t^\text{indep}$ is an independent rate for switching $\M^{i,i}$ to \verb|true| regardless of the value of $\M_t$ at other positions---if we only have this independent part, then this formulation will be equivalent to the factorized case---and $\lambda^\text{prop}$ is the rate for the off-diagonals $\M^{i,j}$ if a neighbor is \verb|true|, responsible for propagating along local neighborhoods---for simplicity, this part is time independent. We then map from this extended space to the space of $\m$ by the mapping:
\begin{equation}
\m_t^j = \M_t^{1,j} \lor \M_t^{2,j} \lor \dots \lor \M_t^{N,j}
\end{equation}
That is, $\m_t^j$ is \verb|true| if any value in the column of $\M_t^{:,j}$ is \verb|true|. 

\paragraph{Augmented rate.} 
We now have a rate $u_t(\M | \M_t, z_0, z_1)$ that generates $p_t(\M | z_0, z_1)$ and can apply Lemma~\ref{lem:determinstic_rates} twice to determine rate $u_t(z, \m, \M | z_t, \m_t, \M_t, z_0, z_1)$ that generates $p_t(z, \m, \M | z_0, z_1)$.  The target summed rate we need for training a localized path model (where we consider $z$ as observed and $(\m, \M)$ as auxiliary) is for $z \neq z_t$
\begin{align}
\label{eq:training_rate_local}
    &\sum_{\m, \M} u_t(z, \m, \M | z_t, \m_t, \M_t, z_0, z_1) \\
    &\; = \begin{cases}
    \delta_{z_1^j}(z^j)\left(\lambda_t^\text{indep} + \sum_i \1_{[\M^{i,j-1}_t \lor \M^{i,j+1}_t]} \lambda^\text{prop}\right) &\text{ if $z$ and $z_t$ differ only in $j$-th token} \\
    0 &\text{ otherwise}
    \end{cases}
\end{align}

To utilize specifically for localized edit flows, we extend our rates again to generate $p_t(x, z, \m, \M | z_0, z_1)$ and the rate needed for training localized edit flows, prior to the sum over additional auxiliary $z$, is simply $\eqref{eq:training_rate_local}$ multiplied by $\delta_{\strip(z)}(x)$. 
Following the same steps as before, the edit flow training loss using localized rates is: 
\begin{align}
    \mathcal{L}(\theta, \lambda) 
    &= -\E_{t, \pi(z_0, z_1), p_t(x_t, z_t, \m_t, \M_t | z_0, z_1)} \left[ u_t^\theta(x_t | x_t) +\sum_{i=1}^N \1_{[z_1^i \neq z_t^i]} \lambda^{\text{eff}}_{i,t} \log u_t^\theta(x(z_t, z_1, i), x_t) \right]
\end{align}
where $x(z_t, z_1, i) = \strip(z_t^1,\dots,z_t^{i-1},z_1^i,z_t^{i+1},\dots,z_t^N)$ and 
$\lambda^{\text{eff}}_{i,t} = \lambda_t^\text{indep} + \sum_l \1_{[\M^{l,i-1}_t \lor \M^{l,i+1}_t]} \lambda^{\text{prop}}$.

\paragraph{Parameterization.} For $\lambda_t^\text{indep}$, we can reuse the same form from the factorized case defined by a scheduler $\kappa_t$,
\begin{equation}\label{eq:rate_indep}
    \lambda_t^\text{indep} = \frac{\dot{\kappa}_t}{1 - \kappa_t}, \quad \kappa_0 = 0, \kappa_1 = 1
\end{equation}
which allows us to ensure that $\m_1^i = \verb|true|$ for all $i$ and whose integral can be obtained easily. For $\lambda^\text{prop}$, we choose an appropriate constant, the value of which corresponds to the expected number of propagations within a unit interval of time.

\paragraph{Sampling.} In order to allow efficient training, we need to sample $(\m_t, \M_t)$ for a given $t$ without simulating the CTMC \eqref{eq:localized_mask_ctmc}. 
The construction of \eqref{eq:localized_mask_ctmc} is designed explicitly to allow efficient sampling. 
Since the CTMC processes are independent for each $\M^i$, we can simulate them independently.   
Furthermore, for every $\M^{i,j}$ the source of the propagation can only be from $\M^{i,i}$.
Thus, we can make use of the following 2-step sampling algorithm given $t$: 
\begin{enumerate}
    \item For each $i$, independently sample the time $t_i^* \in [0, 1]$ that each $\M^{i,i}$ would switch to \verb|true| based on the independent rate $\lambda_t^\text{indep}$. If $t_i^* \leq t$, then $\M_t^{i,i}$ is set to \verb|true|. 
    \item For each $i$ such that $t_i^* \leq t$, sample the number of neighbors to the left and right that are switched to \verb|true| due to propagation with rate $\lambda^\text{prop}$ from $\M^{i,i}$ during the time interval $[t_i^*, t]$.
\end{enumerate}
Afterwards, we can set $\m_t^j = \M_t^{1,j} \lor \M_t^{2,j} \lor \dots \lor \M_t^{N,j}$.  

\textit{Step 1} of this sampling algorithm requires determining the time of the switch $t^*$. This is equivalent to finding the occurrence time of an inhomogeneous Poisson process with intensity function $\lambda_t^\text{indep}$. 
This can be done via the inverse method \citep{rasmussen2018lecture} as follows.
\begin{enumerate}
    \item Sample $u \sim \text{Unif}(0, 1)$
    \item Compute $t^*$ s.t. $u = \exp\{ - \int_{0}^{t^*} \lambda_t^\text{indep} \mathrm{d}t \}$
\end{enumerate}
For the parameterization in \eqref{eq:rate_indep}, we can analytically derive this.
\begin{equation}
t^* =\kappa^{-1} \left( u \right) \quad \text{ where } \quad u \sim \text{Unif}(0, 1)
\end{equation}
\textit{Step 2} of the sampling algorithm consists of determining how many neighbors get propagated from each source $\M^{i,i}$ within a certain time interval $[t_i^*, t]$. Since neighbors on the same side can only get propagated sequentially, this is equivalent to determining the number of occurrences from a homogeneous Poisson process with intensity $\lambda^\text{prop}$. The formula for this is
\begin{equation}
    \mathsf{N}_i \sim \text{Pois}(\cdot \;;\; \lambda^\text{prop} \Delta t_i), \quad \text{ where } \quad \Delta t_i = t - t_i^*
\end{equation}
We would sample two variables i.i.d. $\mathsf{N}_i^l$ and $\mathsf{N}_i^r$ for the number of neighbors propagated to the left and to the right of $\M^{i,i}$, respectively. The logic for $\M_t^{i,j}$ can be concisely expressed as
\begin{equation}
    \M_t^{i,j} = \M_t^{i,i} \land  \Big[ \left( (j < i) \land (j \geq i - \mathsf{N}_i^l) \right) \lor \left( (j > i) \land (j \leq i + \mathsf{N}_i^r) \right) \Big].
\end{equation}
All computations within each step of the sampling algorithm can be completely parallelized, resulting in fast sampling of $\m_t$.

\newpage

\section{Training data analysis} \label{training_analysis}

\begin{figure}[H]
    \begin{minipage}{0.49\textwidth}
        \includegraphics[width=\textwidth]{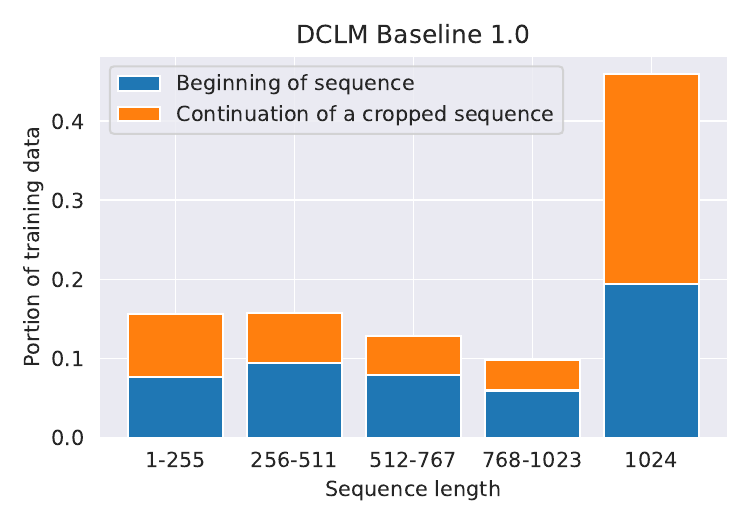}
        \captionof{figure}{54\% of the training data consists of sequences of length < 1024 and 57\% of these are self contained sequences (meaning that they start with a <BOS> token and have < 1024 tokens in total).}
        \label{fig:data_lengths1}
    \end{minipage} \hfill
    \begin{minipage}{0.49\textwidth}
        \includegraphics[width=\textwidth]{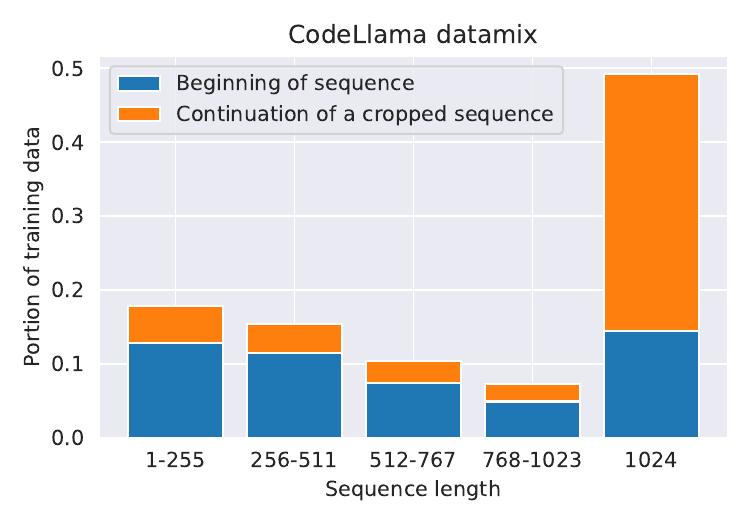}
        \captionof{figure}{50\% of the training data consists of sequences of length < 1024 and 72\% of these are self contained sequences (meaning that they start with a <BOS> token and have < 1024 tokens in total).}
        \label{fig:data_lengths2}
    \end{minipage}
\end{figure}

\section{Further experimental details} \label{app:hyperparameters}

\begin{figure}[t]
    \begin{minipage}{0.49\textwidth}
        \includegraphics[width=\textwidth]{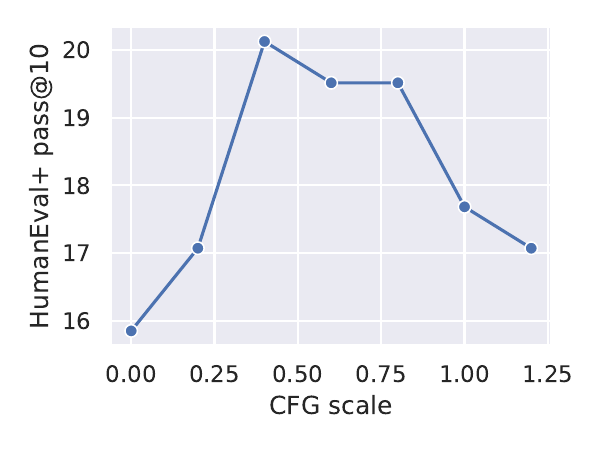}
        \captionof{figure}{Effect of CFG scale at sampling time on the code generation benchmark using the 1.3B parameter Edit Flow model.} \label{fig:cfg_sweep}
    \end{minipage} \hfill
    \begin{minipage}{0.49\textwidth}
        \includegraphics[width=\textwidth]{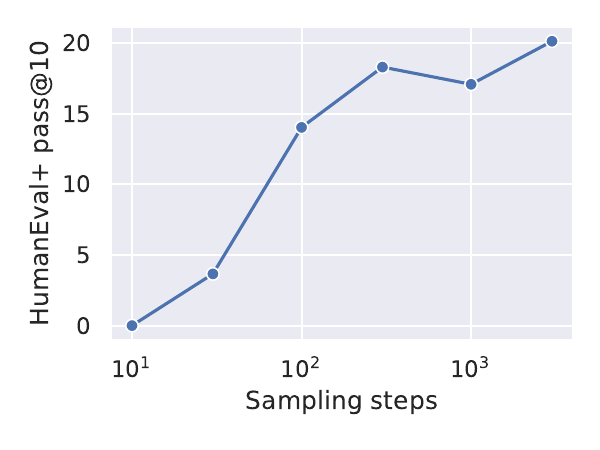}
        \captionof{figure}{Effect of the number of sampling steps on the code generation benchmark using the 1.3B parameter Edit Flow model.}\label{fig:nfe_sweep}
    \end{minipage}
\end{figure}

\begin{figure}[t]
  \centering
    \begin{minipage}{0.95\linewidth}
      \captionsetup{type=figure}
      \begin{lstlisting}[language=Python, basicstyle=, basicstyle=\ttfamily\fontsize{5}{6}\selectfont]
def get_z(ids: list[int]) -> tuple[list[int], list[int]]:
    num_substitutions = min(len(ids), target_num_substitutions)
    num_deletions = target_num_deletions + target_num_substitutions - num_substitutions
    x_0 = x_0 = np.random.randint(low=0, high=vocab_size, size=num_deletions + num_substitutions).tolist()
    sub_id = 0
    z = (
        [epsilon_0_id] * (len(ids) - num_substitutions)
        + [epsilon_1_id] * num_deletions
        + [sub_id] * num_substitutions
    )
    random.shuffle(z)

    z_0: list[int] = []
    z_1: list[int] = []
    ids_index = 0
    x_0_index = 0
    for token in z:
        if token == epsilon_1_id:
            z_0.append(x_0[x_0_index])
            z_1.append(epsilon_1_id)
            x_0_index += 1
        elif token == epsilon_0_id:
            z_0.append(epsilon_0_id)
            z_1.append(ids[ids_index])
            ids_index += 1
        elif token == sub_id:
            z_0.append(x_0[x_0_index])
            z_1.append(ids[ids_index])
            x_0_index += 1
            ids_index += 1
    return z_0, z_1

def get_z_t(z_0: list[int], z_1: list[int], kappa: float) -> list[int]:
    return [
        token_0 if np.random.uniform() > kappa else token_1
        for token_0, token_1 in zip(z_0, z_1)
    ]

# Training loop
for sample in training_samples:
    tokens: list[int] = encode(sample, bos=False)
    z_0, z_1 = get_z(tokens)
    z_0 = [bos_id] + z_0
    z_1 = [bos_id] + z_1
    t: float = np.random.uniform()
    kappa: float = t  # Using a linear schedule
    dkappa: float = 1.0
    z_t: list[int] = get_z_t(z_0, z_1, kappa)
    x_t: list[int] = remove_epsilon(z_t)
    x_t_tensor: torch.Tensor = torch.tensor(x_t).to(device)

    # Forward pass
    insert_lambda, insert_q, delete_lambda, substitute_lambda, substitute_q = model(
        x_t_tensor, t
    )

    # Calculate loss
    loss_term_1: torch.Tensor = torch.sum(
        insert_lambda + delete_lambda + substitute_lambda
    )
    loss_term_2: torch.Tensor = torch.tensor(0.0, device=device)
    x_t_index: int = -1  # Corresponding index in x_t
    for token_t, token_1 in zip(z_t, z_1):
        if token_t != epsilon_0_id and token_t != epsilon_1_id:
            x_t_index += 1

        if token_t == epsilon_0_id and token_1 != epsilon_1_id:
            # Missing token must be inserted
            loss_term_2 = loss_term_2 - (dkappa / (1 - kappa)) * torch.log(
                insert_lambda[x_t_index] * insert_q[x_t_index, token_1]
            )
        elif token_t != epsilon_0_id and token_1 == epsilon_1_id:
            # Extra token must be deleted
            loss_term_2 = loss_term_2 - (dkappa / (1 - kappa)) * torch.log(
                delete_lambda[x_t_index]
            )
        elif (
            token_t != epsilon_0_id
            and token_1 != epsilon_1_id
            and token_t != token_1
        ):
            # Incorrect token must be substituted
            loss_term_2 = loss_term_2 - (dkappa / (1 - kappa)) * torch.log(
                substitute_lambda[x_t_index] * substitute_q[x_t_index]
            )

    loss: torch.Tensor = loss_term_1 + loss_term_2
    optimizer.zero_grad()
    loss.backward()
    optimizer.step()
      \end{lstlisting}
    \end{minipage}
  \caption{Simplified training code for Edit Flows. The helper functions \texttt{get\_z} and \texttt{get\_z\_t} generate noisy and target token sequences, while the training loop computes the loss and updates the model parameters. For brevity, we did not include features such as batching, 
conditioning on a random portion of the sequence and 
scaling the model outputs by the rate.}
  \label{fig:editflow_training_code}
\end{figure}

\paragraph{Training:}
All models were trained of 500,000 steps with batch size of 4096 distributed across $16\times 8$ H100 GPUs, which resulted in ~2T tokens used for the Autoregressive and Mask DFM models. Since the Edit models do not use compute for tokens that are missing from the sequence, they are considerably more compute efficient. They were able to ingest ~6T tokens during the same 500,000 training steps.

\paragraph{Architecture:} Table \ref{tab_architecture} shows the details of the architecture and optimizer used in our experiments.

\paragraph{Conditioning:} A beginning of each sequence in the training set is designated to be conditioning. The portion of the sequence used as conditioning is randomly chosen to be $c^3$ where $c\sim U[0, 1]$. For 10\% of the sequences, we drop the conditioning to allow for unconditional prediction and CFG scaling at inference time.

\paragraph{Image conditioning:} To condition our model on an image input, we follow \citet{liu2024improved} and use an early fusion approach of appending image embeddings as prompts to our sequence models. We use frozen CLIP embeddings \citep{radford2021learning} for computing image embeddings and then map it to the same dimension as the sequence model with a 1-layer MLP projector.

\paragraph{Sampling:} For the pass@1 and pass@10 benchmarks, we tuned the sampling parameters (temperature, top\_p, sampling steps, CFG, divergence-free component) for each model separately with the goal of maximizing performance.  Figures \ref{fig:cfg_sweep} and \ref{fig:nfe_sweep} show the impact of CFG scale and the number of sampling steps on generation quality. Table \ref{tab_sampler} shows the sampling parameters used for evaluation in the code benchmarks.

\paragraph{Mask DFM:} The Mask DFM baseline is trained using the ELBO objective \citep{shaul2024flow} in the image captioning experiments and using the cross-entropy objective \citep{gat2024discrete} in the code and text experiments. Training data that does not meet the sequence length 1024 used by the model is padded using a padding token. This padding token, if generated by the model, is removed at inference time.

\paragraph{Text benchmarks:} Table \ref{tab:text_benchmarks_cfg} shows the CFG scales tuned for the text benchmarks.

\begin{table*}\centering 
\ra{1.2}
\begin{tabular}{@{}l  c c }\toprule
Hyperparameter & 280M configuration  & 1.3B configuration \\
\midrule
Vocabulary size & 32k & 32k \\
Model dimension & 1024 & 2048 \\
Conditioning dimension & 32 & 64 \\
Number of layers & 12 & 16 \\
Number of heads & 16 & 32 \\
Number of KV heads & 8 & 8 \\
Feed-forward dimension & 1740 & 3072 \\
Feed-forward hidden dimension & 6963 & 12288 \\
\hdashline\noalign{\vskip 0.5ex}
Training steps & 500k & 500k \\
Batch size & 4096 & 4096 \\
Optimizer & AdamW & AdamW \\
Learning rate & 3e-4 & 3e-4 \\
Beta 1 & 0.9 & 0.9 \\
Beta 2 & 0.95 & 0.95 \\
Warmup steps & 2000 & 2000 \\
Learning rate schedule & cosine & cosine \\
\bottomrule
\end{tabular}
\caption{Details of the Llama3 architecture and optimizer used in our experiments. Conditioning dimension is used in the text and code experiments: it denotes the dimensionality of an the embedding carrying the binary signal whether a given token is part of the conditioning or not.}\label{tab_architecture}
\end{table*}

\begin{table*}\centering 
\ra{1.1}
\resizebox{\textwidth}{!}{%
\begin{tabular}{@{}l c c c c c c c c}\toprule
\multirow{2}{*}{Sampler Hyperparameter} & \multicolumn{2}{c}{Autoregressive} & \multicolumn{2}{c}{Mask DFM} & \multicolumn{2}{c}{Edit Flow} & \multicolumn{2}{c}{Uniform $X_0$ + Edit Flow} \\
\cmidrule(r){2-3} \cmidrule(r){4-5} \cmidrule(r){6-7} \cmidrule(r){8-9}
& Pass@1 & Pass@10 & Pass@1 & Pass@10 & Pass@1 & Pass@10 & Pass@1 & Pass@10 \\
\midrule
Sampling steps &  &  & 1000 & 1000 & 10000 & 5000 & 5000 & 5000\\
Classifier-free guidance & & & 1.5 & 1.5 & 0.5 & 0.5 & 0.5 & 1.0\\
Temperature & 0.0 & 1.0 & 0.8 & 0.8 & 0.8 & $0.8t + 1.0(1-t)$ & $0.8t + 1.0(1-t)$ & 0.8 \\
Divergence-free component & & & $5t^{0.25}(1-t)^{0.5}$ & $10t^{0.25}(1-t)^{0.5}$ & $60t^{1.5}(1-t)^{0.5}$ & $150t^{1.0}(1-t)^{0.25}$ & $10.0t^{0.25}(1-t)^{0.25}$ & $10.0t^{0.5}(1-t)^{1.0}$ \\
Top p & 0.0 & 0.7 & - & - & 0.5 & 0.3 & 0.7 & 0.9\\
Top k & 1 & - & 2 & 2 & - & - & - & -\\
Reverse CFG & & & & & & & -0.5 & -1.0\\
Reverse temperature & & & & & & & 0.5 & 0.2\\
Reverse top p & & & & & & & 0.8 & 0.8 \\
\bottomrule
\end{tabular}
}
\caption{Sampling parameters used in the code experiments. The parameters were tuned by running random search (N=200 runs for pass@1 and N=20 runs for pass@10) on the HumanEval benchmark. The HumanEval results were then re-computed using a new random seed to avoid evaluation set leakage.}\label{tab_sampler}
\end{table*}

\begin{table*}\centering
\ra{1.1}
\resizebox{\textwidth}{!}{%
\begin{tabular}{@{}l c c c c c c c}\toprule
 Method & HellaSwag & ARC-E & ARC-C & PIQA & OBQA & WinoGrande \\
\midrule 
 Mask DFM & 0.0 & 0.5 & 0.0  & 0.5 & 0.0 & 0.0 \\
 Edit Flow &  1.0 &  0.5 &   0.5 & 0.5 &  1.0 &  0.5 \\
\bottomrule
\end{tabular}
}
\caption{CFG scales used in the text benchmarks. We only tuned CFG scale: we swept the values 0.0, 0.5, 1.0, 2.0, 5.0 and 10.0 on every benchmark and report the best results.}
\label{tab:text_benchmarks_cfg}
\end{table*}

\section{Model preference for minimal edits}\label{app:minimizing_edit_distance}

Similar to continuous flow matching, the generated coupling $p_1(x_1|x_0)$ may differ from the coupling used during training, denoted as $\pi(x_1|x_0)$. The model learns a coupling that involves fewer edits than the average observed training. To illustrate this, we applied edit flows to a toy dataset that includes only insert and delete operations, with no substitutions. The distributions of $\pi(x_0)$ and $\pi(x_1)$ are both uniform over strings of length 4 containing only the characters A and B (as shown in Figure \ref{fig:heatmap}). The probability path is defined such that every character in $x_0$ gets deleted and every character in $x_1$ gets inserted (least optimal alignment). The coupling at training time is uniform.

However, the model does not retain the uniform coupling from training. Figure \ref{fig:heatmap} demonstrates that it prioritizes $x_0, x_1$ pairings that require the fewest edits. For example, $x_0=AAAA$ is $20\times$ more likely to generate $x_1=AAAA$ (requiring no edits) than $x_1=BBBB$ (requiring 4 insertions and 4 deletions). Generally, the cells with the highest values of $p_1(x_1|x_0)$ correspond to pairings that require only a few edits, while the lowest values correspond to pairings that require many edits.

\begin{figure}[H]
    \includegraphics[width=\textwidth]{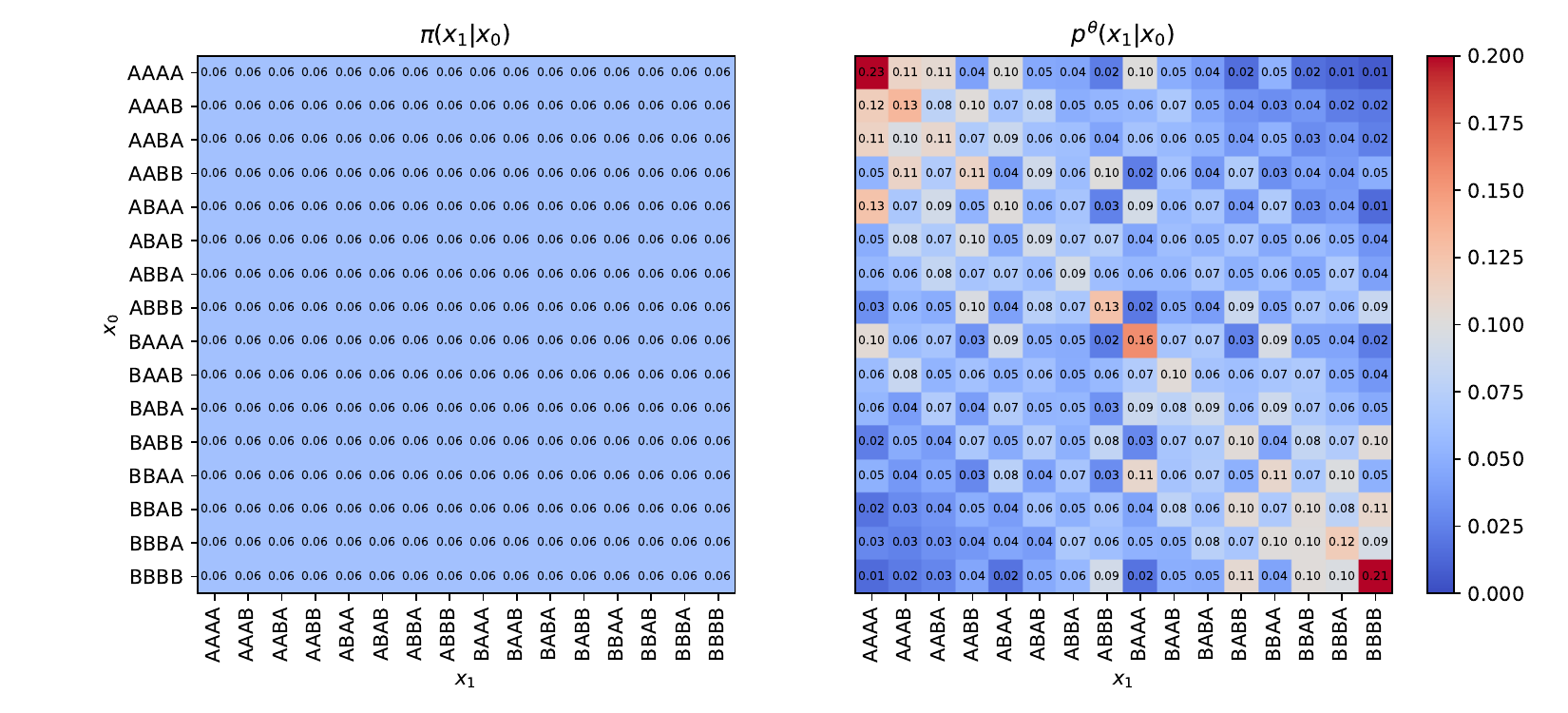}
    \caption{Comparison of the training time coupling ($\pi(x_1|x_0)$) with the coupling learned by the edit flow ($p^\theta(x_1|x_0)$). The model prioritizes pairings that require few edits.}
    \label{fig:heatmap}
\end{figure}

\end{document}